\newtheorem{thm}{Theorem}[section]
\newtheorem{cor}[thm]{Corollary}
\newtheorem{lem}[thm]{Lemma}
\newcommand{\norm}[1]{\left\Vert#1\right\Vert}
\newcommand{\tr}[1]{\textrm{tr}\left(#1\right)}
\newcommand{\R}{\mathbb{R}}
\newcommand{\D}{\mathcal{D}}
\newcommand{\eps}{\varepsilon}
\newcommand{\ona}{normalized-output algorithms}
\newcommand{\Var}{\textrm{Var}}
\newcommand{\Cov}{\textrm{Cov}}
\begin{document}

\title{Manifold Learning: The Price of Normalization}

\author{\name Yair Goldberg \email yairgo@cc.huji.ac.il \\
\addr Department of Statistics\\
The Hebrew University, 91905 Jerusalem, Israel\\
\AND
\name Alon Zakai \email alonzaka@pob.huji.ac.il\\
\addr Interdisciplinary Center for Neural Computation\\
The Hebrew University, 91905 Jerusalem, Israel\\
       \AND
       \name Dan Kushnir \email dan.kushnir@weizmann.ac.il \\
       \addr Department of Computer Science and Applied Mathematics \\
       The Weizmann Institute of Science, 76100 Rehovot, Israel\\
\AND
        \name Ya'acov Ritov \email yaacov.ritov@huji.ac.il \\
       \addr Department of Statistics\\
       The Hebrew University, 91905 Jerusalem, Israel\\
}

\editor{??}

\maketitle
\begin{abstract}
We analyze the performance of a class of manifold-learning
algorithms that find their output by minimizing a quadratic form
under some normalization constraints. This class consists of
Locally Linear Embedding (LLE), Laplacian Eigenmap, Local Tangent
Space Alignment (LTSA), Hessian Eigenmaps (HLLE), and Diffusion
maps. We present and prove conditions on the manifold that are
necessary for the success of the algorithms. Both the finite
sample case and the limit case are analyzed. We show that there
are simple manifolds in which the necessary conditions are
violated, and hence the algorithms cannot recover the underlying
manifolds. Finally, we present numerical results that demonstrate
our claims.
\end{abstract}

\begin{keywords}
dimensionality reduction, manifold learning, Laplacian eigenmap,
diffusion maps, locally linear embedding, local tangent space
alignment,hessian eigenmap
\end{keywords}

\section{Introduction}
Many seemingly complex systems described by high-dimensional data
sets are in fact governed by a surprisingly low number of
parameters. Revealing the low-dimensional representation of such
high-dimensional data sets not only leads to a more compact
description of the data, but also enhances our understanding of the
system. Dimension-reducing algorithms attempt to simplify the
system's representation without losing significant structural
information. Various dimension-reduction algorithms were developed
recently to perform embeddings for manifold-based data sets. These
include the following algorithms: Locally Linear
Embedding~\citep[LLE,][]{LLE}, Isomap~\citep{ISOMAP}, Laplacian
Eigenmaps~\citep[LEM,][]{belkin}, Local Tangent Space
Alignment~\citep[LTSA,][]{LTSA}, Hessian
Eigenmap~\citep[HLLE,][]{HessianEigenMap}, Semi-definite
Embedding~\citep[SDE,][]{Weinberger} and Diffusion
Maps~\citep[DFM,][]{DFM}.

These manifold-learning algorithms compute an embedding for some
given input. It is assumed that this input lies on a
low-dimensional manifold, embedded in some high-dimensional space.
Here a manifold is defined as a topological space that is locally
equivalent to a Euclidean space. It is further assumed that the
manifold is the image of a low-dimensional domain. In particular,
the input points are the image of a sample taken from the domain.
The goal of the manifold-learning algorithms is to recover the
original domain structure, up to some scaling and rotation. The
non-linearity of these algorithms allows them to reveal the domain
structure even when the manifold is not linearly embedded.

The central question that arises when considering the output of a
manifold-learning algorithm is, whether the algorithm reveals the
underlying low-dimensional structure of the manifold. The answer
to this question is not simple. First, one should define what
``revealing the underlying lower-dimensional description of the
manifold" actually means. Ideally, one could measure the degree of
similarity between the output and the original sample. However,
the original low-dimensional data representation is usually
unknown. Nevertheless, if the low-dimensional structure of the
data is known in advance, one would expect it to be approximated
by the dimension-reducing algorithm, at least up to some rotation,
translation, and global scaling factor. Furthermore, it would be
reasonable to expect the algorithm to succeed in recovering the
original sample's structure asymptotically, namely, when the
number of input points tends to infinity. Finally, one would hope
that the algorithm would be robust in the presence of noise.

Previous papers have addressed the central question posed earlier.
\citet{LTSA} presented some bounds on the local-neighborhoods'
error-estimation for LTSA. However, their analysis says nothing
about the global embedding. \citet{LTSAConvergence1} proved that,
asymptotically, LTSA recovers the original sample up to an affine
transformation. They assume in their analysis that the level of
noise tends to zero when the number of input points tends to
infinity. \citet{IsoMapConvergence} proved that, asymptotically,
the embedding given by the Isomap algorithm~\citep{ISOMAP}
recovers the geodesic distances between points on the manifold.

In this paper we develop theoretical results regarding the
performance of a class of manifold-learning algorithms, which
includes the following five algorithms: Locally Linear Embedding
(LLE), Laplacian Eigenmap (LEM), Local Tangent Space Alignment
(LTSA), Hessian Eigenmaps (HLLE), and Diffusion maps (DFM).

We refer to this class of algorithms as the \ona. The \ona~share a
common scheme for recovering the domain structure of the input
data set. This scheme is constructed in three steps. In the first
step, the local neighborhood of each point is found. In the second
step, a description of these neighborhoods is computed. In the
third step, a low-dimensional output is computed by solving some
convex optimization problem under some normalization constraints.
A detailed description of the algorithms is given in
Section~\ref{sec:algo}.

In Section~\ref{sec:mainExample} we discuss informally the criteria
for determining the success of manifold-learning algorithms. We show
that one should not expect the \ona~to recover geodesic distances or
local structures. A more reasonable criterion for success is a high
degree of similarity between the output of the algorithms and the
original sample, up to some affine transformation; the definition of
similarity will be discussed later. We demonstrate that under
certain circumstances, this high degree of similarity does not
occur. In Section~\ref{sec:grid} we find necessary conditions for
the successful performance of LEM and DFM on the two-dimensional
grid. This section serves as an explanatory introduction to the more
general analysis that appears in Section~\ref{sec:general}. Some of
the ideas that form the basis of the analysis in
Section~\ref{sec:grid} were discussed independently by both
\citet{gerber} and ourselves~\citep{ourICML}.
Section~\ref{sec:general} finds necessary conditions for the
successful performance of all the \ona~on general two-dimensional
manifolds. In Section~\ref{sec:asymptotics} we discuss the
performance of the algorithms in the asymptotic case. Concluding
remarks appear in Section~\ref{sec:discussion}. The detailed proofs
appear in the Appendix.

Our paper has two main results. First, we give well-defined
necessary conditions for the successful performance of the \ona.
Second, we show that there exist simple manifolds that do not
fulfill the necessary conditions for the success of the
algorithms. For these manifolds, the \ona~fail to generate output
that recovers the structure of the original sample. We show that
these results hold asymptotically for LEM and DFM. Moreover, when
noise, even of small variance, is introduced, LLE, LTSA, and HLLE
will fail asymptotically on some manifolds. Throughout the paper,
we present numerical results that demonstrate our claims.

\section{Description of Output-normalized Algorithms }\label{sec:algo}
In this section we describe in short the \ona. The presentation of
these algorithms is not in the form presented by the respective
authors. The form used in this paper emphasizes the similarities
between the algorithms and is better-suited for further derivations.
In Appendix~\ref{sec:appendixAlgorithms} we show the equivalence of
our representation of the algorithms and the representations that
appear in the original papers.

Let $X=[x_1,\ldots,x_N]', \, x_i\in \R^\D$ be the input data where
$\D$ is the dimension of the ambient space and $N$ is the size of
the sample. The \ona~attempt to recover the underlying structure
of the input data $X$ in three steps.

In the first step, the \ona~assign neighbors to each input point
$x_i$ based on the Euclidean distances in the high-dimensional
space\footnote{The neighborhoods are not mentioned explicitly
by~\citet{DFM}. However, since a sparse optimization problem is
considered, it is assumed implicitly that neighborhoods are
defined (see Sec.~2.7 therein).}. This can be done, for example,
by choosing all the input points in an $r$-ball around $x_i$ or
alternatively by choosing $x_i$'s $K$-nearest-neighbors. The
neighborhood of $x_i$ is given by the matrix
$X_i=[x_i,x_{i,1},\ldots,x_{i,K}]'$ where $x_{i,j}:\,j=1,\ldots,K$
are the neighbors of $x_i$. Note that $K=K(i)$ can be a function
of $i$, the index of the neighborhood, yet we omit this index to
simplify the notation. For each neighborhood, we define the radius
of the neighborhood as
\begin{equation}\label{eq:radius_of_neighborhood}
r(i)=\max_{j,k\in\{0,\ldots,K\}}\norm{x_{i,j}-x_{i,k}}
\end{equation}
where we define $x_{i,0}=x_i$. Finally, we assume throughout this
paper that the neighborhood graph is connected.

In the second step, the \ona~compute a description of the local
neighborhoods that were found in the previous step. The
description of the $i$-th neighborhood is given by some weight
matrix $W_i$. The matrices $W_i$ for the different algorithms are
presented.
\begin{itemize}
  \item LEM and DFM: $W_i$ is a $K\times (K+1)$ matrix,
  \begin{equation*}
    W_i= \left(
         \begin{array}{ccccc}
          w_{i,1}^{1/2} & -w_{i,1}^{1/2} & 0 &\cdots  & 0  \\
           w_{i,2}^{1/2} & 0 & -w_{i,2}^{1/2}& \ddots & \vdots \\
           \vdots & \vdots & \ddots & \ddots & 0 \\
           w_{i,K}^{1/2} &0 & \cdots & 0 & -w_{i,K}^{1/2} \\
         \end{array}
       \right)\;.
 \end{equation*}

For LEM $w_{i,j}=1$ is a natural choice, yet it is also possible
to define the weights as $\tilde{w}_{i,j}=e^{-\norm{x_i -
x_{i,j}}^2/\eps}$, where $\varepsilon$ is the width parameter of
the kernel. For the case of DFM,
\begin{equation}\label{eq:w_ijForDFM}
    w_{i,j}=\frac{k_{\eps}(x_i,x_{i,j})}{q_{\eps}(x_i)^{\alpha}q_{\eps}(x_{i,j})^{\alpha}}\,,
\end{equation}
where $k_{\varepsilon}$ is some rotation-invariant kernel,
$q_{\eps}(x_i)=\sum_j k_{\eps}(x_i,x_{i,j})$ and $\eps$ is again a
width parameter. We will use $\alpha=1$ in the normalization of
the diffusion kernel, yet other values of $\alpha$ can be
considered \citep[see details in][]{DFM}. For both LEM and DFM, we
define the matrix $D$ to be a diagonal matrix where
$d_{ii}=\sum_{j}w_{i,j}$.

\item LLE: $W_i$ is a $1\times (K+1)$ matrix,
  \begin{equation*}
    W_i=\left(
        \begin{array}{cccc}
          1 & -w_{i,1} & \cdots &  -w_{i,K} \\
        \end{array}
      \right)\,.
\end{equation*}
The weights $w_{i,j}$ are chosen so that $x_i$ can be best linearly
reconstructed from its neighbors. The weights minimize the
reconstruction error function
\begin{equation}\label{eq:reconstructionError}
    \Delta^i\left(w_{i,1},\ldots,w_{i,K}\right)=\|x_i-\sum_j w_{i,j} x_{i,j}\| ^2
\end{equation}
under the constraint $\sum_j w_{i,j}=1$. In the case where there
is more than one solution that minimizes $\Delta^i$,
regularization is applied to force a unique solution~\citep[for
details, see][]{thinkGlobally}.
  \item LTSA: $W_i$ is a $(K+1)\times (K+1)$ matrix,
    \begin{equation*}
    W_i=(I-P_i {P_i}')H\,.
  \end{equation*}
  Let $U_i L_i {V_i}'$ be the SVD of
  $X_i-\textbf{1}{\bar{x}_i}'$ where $\bar{x}_i$ is the sample mean of
  $X_i$ and $\textbf{1}$ is a vector of ones~\citep[for details about SVD, see, for example,][]{MatrixComputations}.
  Let $P_i=[u_{(1)},\ldots,u_{(d)}]$ be the matrix that holds the first $d$ columns of
  $U_i$ where $d$ is the output dimension. The matrix
  $H=I-\frac{1}{K}\textbf{1}\textbf{1}'$ is the centering matrix. See also~\citet{LTSAConvergence1}
  regarding this representation of the algorithm.

  \item HLLE: $W_i$ is a $d(d+1)/2 \times (K+1)$ matrix,
  \begin{equation*}
    W_i=(\textbf{0}, H^i)
  \end{equation*}
  where $\textbf{0}$ is a vector of
  zeros and $H^i$ is the $\frac{d(d+1)}{2}\times K$ Hessian
  estimator.\\
  The estimator can be calculated as follows. Let $U_i L_i {V_i}'$ be the SVD of
  $X_i-\textbf{1}{\bar{x}_i}'$. Let
  \begin{equation*}
    M_i=[\textbf{1},U_i^{(1)},\ldots,U_i^{(d)},\textrm{diag}(U_i^{(1)}U_i^{(1)}{}'),
    \textrm{diag}(U_i^{(1)}U_i^{(2)}{}'),\ldots,
    \textrm{diag}(U_i^{(d)}U_i^{(d)}{}')]\,,
  \end{equation*}
  where the operator diag returns a column vector
  formed from the diagonal elements of the matrix.
  Let $\widetilde{M}_i$ be the result of the Gram-Schmidt orthonormalization on $M_i$. Then
  $H^i$ is defined as the transpose of the last $d(d+1)/2$ columns
  of $\widetilde{M}_i$.
    \end{itemize}

The third step of the \ona~is to find a set of points
$Y=[y_1,\ldots,y_N]', \, y_i\in \R^d $ where $d\leq \D$ is the
dimension of the manifold. $Y$ is found by minimizing a convex
function under some normalization constraints, as follows. Let $Y$
be any $N\times d $ matrix. We define the $i$-th neighborhood matrix
$Y_i=[y_i,y_{i,1},\ldots,y_{i,K}]'$ using the same pairs of indices
$i,j$ as in $X_i$. The cost function for all of the \ona~is given by
\begin{equation}\label{eq:cost_function}
\Phi(Y)=\sum_{i=1}^N\phi(Y_i)=\sum_{i=1}^N \norm{W_i Y_i}^2_F\,,
\end{equation}
under the normalization constraints
\begin{equation}\label{eq:normalization_constraints}
    \left\{\begin{array}{c}
            Y'DY=I \\
            Y'D\textbf{1}=\textbf{0}
          \end{array}\right.\;\mathrm{for\,\,LEM\,\,and\,\,DFM,\;\;}
    \left\{\begin{array}{c}
            \Cov(Y)=I \\
            Y'\textbf{1}=\textbf{0}
          \end{array}\right.\mathrm{\;for\,\,LLE,\,\,LTSA\,\,and\,\,HLLE,}
\end{equation}
where $\norm{\;}_F$ stands for the Frobenius norm, and $W_i$ is
algorithm-dependent.

Define the output matrix $Y$ to be the matrix that achieves the
minimum of $\Phi$ under the normalization constraints of
Eq.~\ref{eq:normalization_constraints} ($Y$ is defined up to
rotation). Then we have the following: the embeddings of LEM and LLE
are given by the according output matrices $Y$; the embeddings of
LTSA and HLLE are given by the according output matrices
$\frac{1}{\sqrt{N}}Y$; and the embedding of DFM is given by a linear
transformation of $Y$ as discussed in
Appendix~\ref{sec:appendixAlgorithms}. The discussion of the
algorithms' output in this paper holds for any affine transformation
of the output (see Section~\ref{sec:mainExample}). Thus, without
loss of generality, we prefer to discuss the output matrix $Y$
directly, rather than the different embeddings. This allows a
unified framework for all five normalized-output algorithms.

\section{Embedding quality}\label{sec:mainExample}
In this section we discuss possible definitions of ``successful
performance" of manifold-learning algorithms. To open our
discussion, we present a numerical example. We chose to work with
LTSA rather arbitrarily. Similar results can be obtained using the
other algorithms.

\begin{figure}[t]
\vskip 0.2in
\begin{center}
\includegraphics{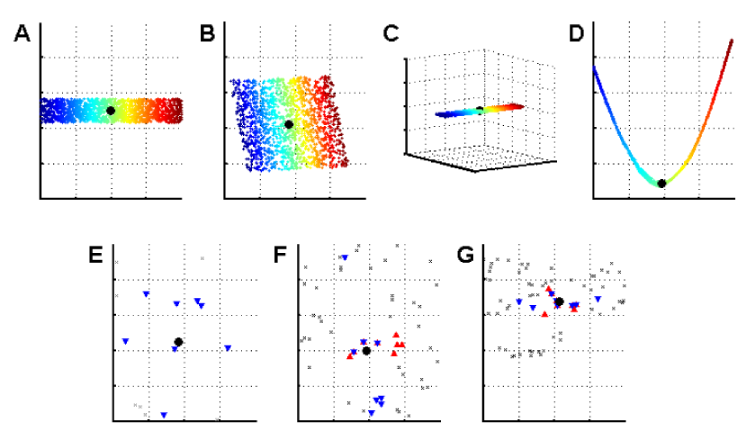}
\caption{The output of LTSA~(B) for the (two-dimensional) input
shown in~(A), where the input is a uniform sample from the strip
$[0,1]\times [0,6]$. Ideally one would expect the two to be
identical. The normalization constraint shortens the horizontal
distances and lengthens the vertical distances, leading to the
distortion of geodesic distances. (E) and (F) focus on the points
shown in black in (A) and (B), respectively. The blue triangles in
(E) and (F) are the $8$-nearest-neighborhood of the point denoted by
the full black circle. The red triangles in (F) indicate the
neighborhood computed for the corresponding point (full black
circle) in the output space. Note that less than half of the
original neighbors of the point remain neighbors in the output
space. The input (A) with the addition of Gaussian noise normal to
the manifold and of variance $10^{-4}$ is shown in~(C). The output
of LTSA for the noisy input is shown in~(D). (G) shows a closeup of
the neighborhood of the point indicated by the black circle in~(D).}
\label{fig:LTSAexample}
\end{center}
\vskip -0.2in
\end{figure}

The example we consider is a uniform sample from a two-dimensional
strip, shown in Fig.~\ref{fig:LTSAexample}A. Note that in this
example, $\D=d$; i.e., the input data is identical to the original
data. Fig.~\ref{fig:LTSAexample}B presents the output of LTSA on
the input in Fig.~\ref{fig:LTSAexample}A. The most obvious
difference between input and output is that while the input is a
strip, the output is roughly square. While this may seem to be of
no importance, note that it means that the algorithm, like all the
\ona, does not preserve geodesic distances even up to a scaling
factor. By definition, the geodesic distance between two points on
a manifold is the length of the shortest path on the manifold
between the two points. Preservation of geodesic distances is
particularly relevant when the manifold is isometrically embedded.
In this case, assuming the domain is convex, the geodesic distance
between any two points on the manifold is equal to the Euclidean
distance between the corresponding domain points. Geodesic
distances are conserved, for example, by the Isomap
algorithm~\citep{ISOMAP}.

Figs.~\ref{fig:LTSAexample}E and \ref{fig:LTSAexample}F present
closeups of Figs.~\ref{fig:LTSAexample}A and
\ref{fig:LTSAexample}B, respectively. Here, a less obvious
phenomenon is revealed: the structure of the local neighborhood is
not preserved by LTSA. By local structure we refer to the angles
and distances (at least up to a scale) between all points within
each local neighborhood. Mappings that preserve local structures
up to a scale are called conformal mappings \citep[see for
example][]{cIsomap,ShaExtensionSpectralMethods}. In addition to
the distortion of angles and distances, the $K$-nearest-neighbors
of a given point on the manifold do not necessarily correspond to
the $K$-nearest-neighbors of the respective output point, as shown
in Figs.~\ref{fig:LTSAexample}E and \ref{fig:LTSAexample}F.
Accordingly, we conclude that the original structure of the local
neighborhoods is not necessarily preserved by the \ona.

The above discussion highlights the fact that one cannot expect
the \ona~to preserve geodesic distances or local neighborhood
structure. However, it seems reasonable to demand that the output
of the \ona~resemble an affine transformation of the original
sample. In fact, the output presented in
Fig.~\ref{fig:LTSAexample}B is an affine transformation of the
input, which is the original sample, presented in
Fig.~\ref{fig:LTSAexample}A. A formal similarity criterion based
on affine transformations is given by~\citet{LTSAConvergence1}. In
the following, we will claim that a normalized-output algorithm
succeeds (or fails) based on the existence (or lack thereof) of
resemblance between the output and the original sample, up to an
affine transformation.

Fig.~\ref{fig:LTSAexample}D presents the output of LTSA on a noisy
version of the input, shown in Fig.~\ref{fig:LTSAexample}C. In
this case, the algorithm prefers an output that is roughly a
one-dimensional curve embedded in $\R^2$. While this result may
seem incidental, the results of all the other \ona~for this
example are essentially the same.

Using the affine transformation criterion, we can state that LTSA
succeeds in recovering the underlying structure of the strip shown
in Fig.~\ref{fig:LTSAexample}A. However, in the case of the noisy
strip shown in Fig.~\ref{fig:LTSAexample}C, LTSA fails to recover
the structure of the input. We note that all the other \ona~perform
similarly.

For practical purposes, we will now generalize the definition of
failure of the \ona. This definition is more useful when it is
necessary to decide whether an algorithm has failed, without
actually computing the output. This is useful, for example, when
considering the outputs of an algorithm for a class of manifolds.

We now present the generalized definition of failure of the
algorithms. Let $X=X_{N\times d}$ be the original sample. Assume
that the input is given by $\psi(X)\subset \R^\D$, where
$\psi:\R^d\rightarrow\R^\D$ is some smooth function, and $\D\geq d$
is the dimension of the input. Let $Y=Y_{N\times d}$ be an affine
transformation of the original sample $X$, such that the
normalization constraints of Eq.~\ref{eq:normalization_constraints}
hold. Note that $Y$ is algorithm-dependent, and that for each
algorithm, $Y$ is unique up to rotation and translation. When the
algorithm succeeds it is expected that the output will be similar to
a normalized version of $X$, namely to $Y$.  Let $Z=Z_{N\times d}$
be any matrix that satisfies the same normalization constraints. We
say that the algorithm has failed if $\Phi(Y)>\Phi(Z)$, and $Z$ is
substantially different from $Y$, and hence also from $X$. In other
words, we say that the algorithm has failed when a substantially
different embedding $Z$ has a lower cost than the most appropriate
embedding $Y$. A precise definition of ``substantially different" is
not necessary for the purposes of this paper. It is enough to
consider $Z$ substantially different from $Y$ when $Z$ is of lower
dimension than $Y$, as in Fig.~\ref{fig:LTSAexample}D.

We emphasize that the matrix $Z$ is not necessarily similar to the
output of the algorithm in question. It is a mathematical
construction that shows when the output of the algorithm is not
likely to be similar to $Y$, the normalized version of the true
manifold structure. The following lemma shows that if
$\Phi(Y)>\Phi(Z)$, the inequality is also true for a small
perturbation of $Y$. Hence, it is not likely that an output that
resembles $Y$ will occur when $\Phi(Y)>\Phi(Z)$ and $Z$ is
substantially different from $Y$.

\begin{lem}\label{lem:criterion_failure}
Let $Y$ be an $N\times d$ matrix. Let $\widetilde{Y}=Y+\eps E$ be a
perturbation of $Y$, where $E$ is an $N\times d$ matrix such that
$\norm{E}_F=1$ and where $\eps>0$. Let $S$ be the maximum number of
neighborhoods to which a single input point belongs. Then for LLE
with positive weights $w_{i,j}$, LEM, DFM, LTSA, and HLLE, we have
\begin{equation*}
    \Phi(\widetilde{Y})>(1-4\eps)\Phi(Y)-4\eps C_a S \,,
\end{equation*}
where $C_a$ is a constant that depends on the algorithm.
\end{lem}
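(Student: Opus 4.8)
I want to show that the cost functional $\Phi$ cannot drop too fast under a small perturbation $\widetilde Y = Y + \eps E$. The natural first step is to expand $\Phi(\widetilde Y)$ neighborhood-by-neighborhood. Since $\Phi(Y)=\sum_i \norm{W_i Y_i}_F^2$, and perturbation acts linearly on the rows, I can write $\widetilde Y_i = Y_i + \eps E_i$, where $E_i$ is the neighborhood-restriction of $E$ using the same index pairs. Then for each $i$,
\[
\norm{W_i \widetilde Y_i}_F^2 = \norm{W_i Y_i}_F^2 + 2\eps\,\tr{(W_i Y_i)'(W_i E_i)} + \eps^2\norm{W_i E_i}_F^2 .
\]
The $\eps^2$ term is nonnegative and can be discarded to get a one-sided bound, so the whole problem reduces to controlling the cross term $\sum_i \tr{(W_iY_i)'(W_iE_i)}$ from below.

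\textbf{Bounding the cross term.} The plan is to apply Cauchy--Schwarz to each neighborhood cross term: $\abs{\tr{(W_iY_i)'(W_iE_i)}} \le \norm{W_iY_i}_F\,\norm{W_iE_i}_F$. Summing and using Cauchy--Schwarz again over $i$ gives
\[
\sum_i \abs{\tr{(W_iY_i)'(W_iE_i)}} \le \Big(\sum_i \norm{W_iY_i}_F^2\Big)^{1/2}\Big(\sum_i \norm{W_iE_i}_F^2\Big)^{1/2} = \Phi(Y)^{1/2}\,\Phi(E)^{1/2}.
\]
So I need a uniform a priori bound of the form $\Phi(E) \le (\text{const})\cdot S$ whenever $\norm{E}_F = 1$. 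This is exactly where the quantity $S$ (the maximum number of neighborhoods containing a single point) and the algorithm-dependent constant $C_a$ enter: each row of $E$ contributes to at most $S$ of the matrices $E_i$, and each $W_i$ has an operator norm bounded by an algorithm-specific constant (for LEM/DFM via the weights, for LLE via the reconstruction weights, and for LTSA/HLLE via the fact that the $W_i$ are built from projections/orthonormalized columns and hence have bounded norm). Combining the row-overlap count $S$ with the per-neighborhood operator-norm bound yields $\Phi(E) \le C_a S$ after absorbing constants, so that the cross term is at least $-\Phi(Y)^{1/2}(C_aS)^{1/2}$.

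\textbf{Assembling the inequality.} Putting the pieces together,
\[
\Phi(\widetilde Y) \ge \Phi(Y) - 2\eps\,\Phi(Y)^{1/2}(C_aS)^{1/2}.
\]
To reach the stated form I would use the elementary bound $2\sqrt{ab}\le a+b$ (equivalently $2\Phi(Y)^{1/2}(C_aS)^{1/2}\le \Phi(Y)+C_aS$), and more carefully split the geometric-mean term so that the $\Phi(Y)$-piece produces the $-4\eps\Phi(Y)$ correction and the $C_aS$-piece produces the additive $-4\eps C_aS$ term, giving $\Phi(\widetilde Y) > (1-4\eps)\Phi(Y) - 4\eps C_aS$. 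The strictness of the final inequality is inherited from retaining (rather than discarding) a strictly positive fragment of the $\eps^2\norm{W_iE_i}_F^2$ terms, or from the fact that the bound is non-tight unless $E$ is aligned with $Y$ in every neighborhood simultaneously.

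\textbf{Main obstacle.} The genuinely delicate step is establishing the uniform bound $\Phi(E)\le C_aS$ with the correct appearance of $S$, since this is the only place the geometry of overlapping neighborhoods is used. For LEM/DFM and LLE the weights are explicit and the operator norms are transparent; the care is needed for LTSA and HLLE, where $W_i=(I-P_iP_i')H$ and $W_i=(\mathbf{0},H^i)$ are idempotent/orthonormal-column constructions whose Frobenius norms must be bounded independently of the sample. Verifying that each such $W_i$ has operator norm bounded by a constant, and then correctly accounting for how the single normalization $\norm{E}_F=1$ distributes across up to $S$ neighborhoods, is where the constant $C_a$ is pinned down and where I expect the real work to lie.
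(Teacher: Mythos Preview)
Your proposal is correct and follows essentially the same route as the paper's proof. The only cosmetic difference is where the arithmetic--geometric inequality is applied: the paper uses the per-term bound $\norm{a}^2 - 2\eps\abs{a'b} \ge (1-4\eps)\norm{a}^2 - 4\eps\norm{b}^2$ inside each neighborhood and then sums, whereas you first sum the cross terms, apply Cauchy--Schwarz globally to obtain $\Phi(Y)^{1/2}\Phi(E)^{1/2}$, and only then invoke $2\sqrt{ab}\le a+b$. Both arguments reduce to the same key estimate $\sum_i\norm{W_iE_i}_F^2 \le C_a S$, which the paper obtains exactly as you anticipate: a uniform bound $\norm{W_i}_F^2 \le C_a$ (computed case-by-case for each algorithm) combined with $\sum_i\norm{E_i}_F^2 \le S\norm{E}_F^2 = S$ from the overlap count.
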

The use of positive weights in LLE is discussed
in~\citet[Section~5]{thinkGlobally}; a similar result for LLE with
general weights can be obtained if one allows a bound on the values
of $w_{i,j}$. The proof of Lemma~\ref{lem:criterion_failure} is
given in Appendix~\ref{sec:appendixLemCriterionFailure}.

\section{Analysis of the two-dimensional grid}\label{sec:grid}
In this section we analyze the performance of LEM on the
two-dimensional grid. In particular, we argue that LEM cannot
recover the structure of a two-dimensional grid in the case where
the aspect ratio of the grid is greater than $2$. Instead, LEM
prefers a one-dimensional curve in $\R^2$. Implications also
follow for DFM, as explained in
Section~\ref{sec:ImplicationToDFM}, followed by a discussion of
the other \ona. Finally, we present empirical results that
demonstrate our claims.

In Section~\ref{sec:general} we prove a more general statement
regarding any two-dimensional manifold. Necessary conditions for
successful performance of the \ona~on such manifolds are
presented. However, the analysis in this section is important in
itself for two reasons. First, the conditions for the success of
LEM on the two-dimensional grid are more limiting. Second, the
analysis is simpler and points out the reasons for the failure of
all the \ona~when the necessary conditions do not hold.

\subsection{Possible embeddings of a two-dimensional grid}\label{sec:two_embeddings}
We consider the input data set $X$ to be the two-dimensional grid
$[-m,\ldots,m]\times[-q,\ldots,q]$, where $m \geq q$. We denote
$x_{ij}=(i,j)$. For convenience, we regard $X=(X^{(1)},X^{(2)})$
as an $N\times 2$ matrix, where $N=(2m+1)(2q+1)$ is the number of
points in the grid. Note that in this specific case, the original
sample and the input are the same.

In the following we present two different embeddings, $Y$ and $Z$.
Embedding $Y$ is the grid itself, normalized so that
$\mathrm{Cov}(Y)=I$. Embedding $Z$ collapses each column to a point
and positions the resulting points in the two-dimensional plane in a
way that satisfies the constraint $\mathrm{Cov}(Z)=I$ (see
Fig.~\ref{fig:embeddings} for both). The embedding $Z$ is a curve in
$\R^2$ and clearly does not preserve the original structure of the
grid.

\begin{figure}[t]
\vskip 0.2in
\begin{center}
\includegraphics[width=4.2in]{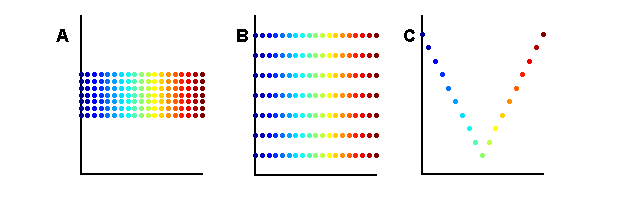}
\caption{(A) The input grid. (B) Embedding $Y$, the normalized grid.
(C) Embedding $Z$, a curve that satisfies $\mathrm{Cov}(Z)=I$.}
\label{fig:embeddings}
\end{center}
\vskip -0.2in
\end{figure}

We first define  the embeddings more formally. We start by defining
$\widehat{Y}=X(X'DX)^{-1/2}$. Note that this is the only linear
transformation of $X$ (up to rotation) that satisfies the conditions
$\widehat{Y}'D\textbf{1}=\textbf{0}$ and
$\widehat{Y}'D\widehat{Y}=I$, which are the normalization
constraints for LEM (see Eq.~\ref{eq:normalization_constraints}).
However, the embedding $\widehat{Y}$ depends on the matrix $D$,
which in turn depends on the choice of neighborhoods. Recall that
the matrix $D$ is a diagonal matrix, where $d_{ii}$ equals the
number of neighbors of the $i$-th point. Choose $r$ to be the radius
of the neighborhoods. Then, for all inner points $x_{ij}$, the
number of neighbors $K(i,j)$ is a constant, which we denote as $K$.
We shall call all points with less than $K$ neighbors
\textit{boundary points}. Note that the definition of boundary
points depends on the choice of $r$. For inner points of the grid we
have $d_{ii}\equiv K$. Thus, when $K\ll N$ we have $X'DX\approx
KX'X$.

We define $Y=X\Cov(X)^{-1/2}$. Note that $Y'\textbf{1}=0$,
$\Cov(Y)=I$ and for $K\ll N$, $Y\approx \sqrt{KN}\widehat{Y}$. In
this section we analyze the embedding $Y$ instead of $\widehat{Y}$,
thereby avoiding the dependence on the matrix $D$ and hence
simplifying the notation. This simplification does not significantly
change the problem and does not affect the results we present.
Similar results are obtained in the next section for general
two-dimensional manifolds, using the exact normalization constraints
(see Section~\ref{sec:embeddings_LEM_DFM}).

Note that $Y$ can be described as the set of points $[-m/\sigma,
\ldots ,m/\sigma]\times[-q/\tau, \ldots, q/\tau]$, where
$y_{ij}=(i/\sigma,j/\tau)$. The constants
$\sigma^2=\textrm{Var}(X^{(1)})$ and $\tau^2=\textrm{Var}(X^{(2)})$
ensure that the normalization constraint $\mathrm{Cov}(Y)=I$ holds.
Straightforward computation (see Appendix~\ref{sec:appendixSigma})
shows that
\begin{equation}\label{eq:sigma}
\sigma^2=\frac{(m+1)m}{3}\,;\;\tau^2=\frac{(q+1)q}{3} \,.
\end{equation}

The definition of the embedding $Z$ is as follows:
\begin{equation}\label{eq:z_i}
    z_{ij}=\left\{ \begin{array}{cc}
                \left(\frac{i}{\sigma},\frac{-2i}{\rho}-\bar{z}^{(2)}\right) & i
                \leq 0\\
                 \\
                 \left(\frac{i}{\sigma},\frac{2i}{\rho}-\bar{z}^{(2)}\right) & i\geq 0
               \end{array}\right.\,,
\end{equation}
where $\bar{z}^{(2)}=\frac{(2q+1)2}{N\rho}\sum_{i=1}^{m}(2i)$
ensures that $Z'\textbf{1}=\textbf{0}$, and $\sigma$ (the same
$\sigma$ as before; see below) and $\rho$ are chosen so that
sample variance of $Z^{(1)}$ and $Z^{(2)}$ is equal to one. The
symmetry of $Z^{(1)}$ about the origin implies that
$\textrm{Cov}(Z^{(1)},Z^{(2)})=0$, hence the normalization
constraint $\textrm{Cov}(Z)=I$ holds. $\sigma$ is as defined in
Eq.~\ref{eq:sigma}, since $Z^{(1)}=Y^{(1)}$ (with both defined
similarly to $X^{(1)}$). Finally, note that the definition of
$z_{ij}$ does not depend on $j$.

\subsection{Main result for LEM on the two-dimensional grid}
We estimate $\Phi(Y)$ by $N \phi(Y_{ij})$ (see
Eq.~\ref{eq:cost_function}), where $y_{ij}$ is an inner point of
the grid and $Y_{ij}$ is the neighborhood of $y_{ij}$; likewise,
we estimate $\Phi(Z)$ by $N \phi(Z_{ij})$ for an inner point
$z_{ij}$. For all inner points, the value of $\phi(Y_{ij})$ is
equal to some value $\phi$. For boundary points, $\phi(Y_{ij})$ is
bounded by $\phi$ multiplied by some constant that depends only on
the number of neighbors. Hence, for large $m$ and $q$, the
difference between $\Phi(Y)$ and $N \phi(Y_{ij})$ is negligible.

The main result of this section states:
\begin{thm}\label{thm:grid}
Let $y_{ij}$ be an inner point and let the ratio $\frac{m}{q}$ be
greater than $2$. Then
\begin{equation*}
\phi(Y_{ij})>\phi(Z_{ij})
\end{equation*}
for neighborhood-radius $r$ that satisfies $1\leq r \leq 3$, or
similarly, for $K$-nearest neighborhoods where $K=4,8,12$.
\end{thm}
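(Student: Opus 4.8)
The plan is to reduce the comparison of the two cost values to a single scalar inequality between the normalization constants, exploiting the symmetry of the neighborhood. For LEM with unit weights $w_{i,j}=1$, each row of the matrix is a signed indicator picking out the difference between the center point and one neighbor, so that
\begin{equation*}
\phi(Y_{ij})=\norm{W_{ij}Y_{ij}}_F^2=\sum_{(a,b)\in\mathcal{N}}\norm{y_{ij}-y_{i+a,j+b}}^2\,,
\end{equation*}
where $\mathcal{N}$ is the set of neighbor offsets $(a,b)$ determined by the radius $r$ (or by $K$). I would fix a generic inner point $x_{ij}$ whose entire neighborhood lies strictly on one side of the column $i=0$, so that $\phi(Y_{ij})$ and $\phi(Z_{ij})$ take the common inner-point values described just before the theorem and no boundary or corner terms intervene.

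First I would write both costs explicitly in terms of the offsets. Since $y_{ij}=(i/\sigma,j/\tau)$, a neighbor at offset $(a,b)$ contributes $a^2/\sigma^2+b^2/\tau^2$; whereas for $Z$, whose second coordinate is the V-shaped profile $2|i|/\rho$ that is linear with slope $\pm 2/\rho$ away from the vertex and does not depend on $j$, the same neighbor contributes $a^2/\sigma^2+4a^2/\rho^2$. Thus
\begin{equation*}
\phi(Y_{ij})-\phi(Z_{ij})=\sum_{(a,b)\in\mathcal{N}}\left(\frac{b^2}{\tau^2}-\frac{4a^2}{\rho^2}\right)\,.
\end{equation*}

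The decisive observation is that every $r$-ball, and each of the listed $K$-nearest-neighbor sets, is invariant under the swap $(a,b)\mapsto(b,a)$, so that $\sum_{\mathcal{N}}a^2=\sum_{\mathcal{N}}b^2=:A$. The $\sigma$-terms have already cancelled, and this symmetry makes the remaining sum collapse to $\phi(Y_{ij})-\phi(Z_{ij})=A\,(\tau^{-2}-4\rho^{-2})$. Hence the theorem becomes the purely arithmetic inequality $\rho^2>4\tau^2$, which no longer refers to the neighborhood at all. I would then compute $\rho^2$ from the constraint $\Var(Z^{(2)})=1$: summing $2|i|/\rho$ over the grid and subtracting the squared mean yields a closed form for $\rho^2$ whose leading behavior is $\rho^2\sim m^2/3$, matching $\sigma^2$, while $\tau^2=(q+1)q/3\sim q^2/3$. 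Substituting gives $\rho^2>4\tau^2\iff m/q>2$, exactly the hypothesis.

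The step I expect to require the most care is the passage from this clean leading-order equivalence to a genuine strict inequality for the finite grid with the specific radii $r\in\{1,2,3\}$ (equivalently $K\in\{4,8,12\}$). The exact value of $\rho^2$ carries lower-order corrections in $m$, so near the boundary $m/q=2$ the sign is delicate; I would therefore keep $\rho^2$ and $\tau^2$ in exact closed form, confirm that $\rho^2>4\tau^2$ holds strictly once $m/q>2$ over the relevant ranges of $m,q$, and separately check that restricting to these small symmetric neighborhoods both preserves the $(a,b)\mapsto(b,a)$ symmetry and keeps $K\ll N$, so that the inner-point approximation $\Phi\approx N\phi$ and the neglect of boundary points are justified.
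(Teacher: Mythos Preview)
Your approach is correct and essentially the same as the paper's: both compute $\phi(Y_{ij})=F(K)(\sigma^{-2}+\tau^{-2})$ and $\phi(Z_{ij})=F(K)(\sigma^{-2}+4\rho^{-2})$ for the symmetric neighborhoods $K=4,8,12$, so the comparison reduces to exactly your scalar inequality $\rho^2>4\tau^2$. The only minor difference is that the paper establishes this inequality in two steps rather than by a direct computation of $\rho^2$: it first proves $\rho^2>\sigma^2$ from the exact variance formula for $Z^{(2)}$, and then uses the integer observation $m/q>2\Rightarrow m+1\geq 2(q+1)$ to get $\sigma^2=m(m+1)/3>4q(q+1)/3=4\tau^2$; chaining these gives precisely the strict finite-grid inequality you flagged as the delicate step.
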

This indicates that for aspect ratios $\frac{m}{q}$ that are greater
than $2$ and above, mapping $Z$, which is essentially
one-dimensional, is preferred to $Y$, which is a linear
transformation of the grid. The case of general $r$-ball
neighborhoods is discussed in Appendix~\ref{sec:appendixRball} and
indicates that similar results should be expected.

The proof of the theorem is as follows. It can be shown analytically
(see Fig.~\ref{fig:penalty}) that
\begin{equation}\label{eq:PhiYestimation}  \phi(Y_{ij})=F(K)\left(\frac{1}{\sigma^2}+\frac{1}{\tau^2}\right)
\,,
\end{equation}
where
\begin{equation}\label{eq:F}
F(4)=2\,;\;\;F(8)=6\,;\;\; F(12)=14\,.
\end{equation}
For higher $K$, $F(K)$ can be approximated for any $r$-ball
neighborhood of $y_{ij}$ (see Appendix~\ref{sec:appendixRball}).

\begin{figure}[t]
\vskip 0.2in
\begin{center}
\includegraphics[width=3.5in]{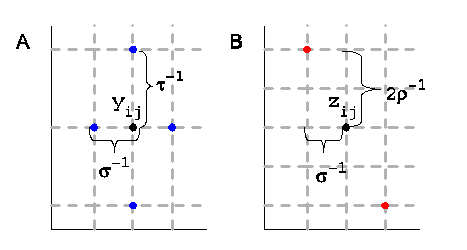}
\caption{ (A) The normalized grid at an inner point $y_{ij}$. The
$4$-nearest-neighbors of $y_{ij}$ are marked in blue. Note that the
neighbors from the left and from the right are at a distance of
$1/\sigma$, while the neighbors from above and below are at a
distance of $1/\tau$. The value of $\phi(Y_{ij})$ is equal to the
sum of squared distances of $y_{ij}$ to its neighbors. Hence, we
obtain that $\phi(Y_{ij})=2/\sigma^2+2/\tau^2$ when $K=4$ and
$\phi(Y_{ij})=2/\sigma^2+2/\tau^2+4(1/\sigma^2+1/\tau^2)$
when $K=8$. 
(B) The curve embedding at an inner point $z_{ij}$. The neighbors
of $z_{ij}$ from the left and from the right are marked in red.
The neighbors from above and below are embedded to the same point
as $z_{ij}$. Note that the squared distance between $z_{ij}$ and
$z_{(i\pm1)j}$ equals $1/\sigma^2+4/\rho^2$. Hence, $\phi(Z_{ij})=
2(1/\sigma^2+4/\rho^2)$ when $K=4$, and $\phi(Z_{ij})=
6(1/\sigma^2+4/\rho^2)$ when $K=8$.}\label{fig:penalty}
\end{center}
\vskip -0.2in
\end{figure}


It can be shown (see Fig.~\ref{fig:penalty}) that
\begin{equation}\label{eq:PhiZestimation}
 \phi(Z_{ij})=\widetilde{F}(K)\left(\frac{1}{\sigma^2}+\frac{4}{\rho^2}\right)
\,,
\end{equation}
where $\widetilde{F}(K) = F(K)$ for $K=4,8,12$. For higher $K$, it
can be shown  (see Appendix~\ref{sec:appendixRball}) that
$\widetilde{F}(K)\approx F(K)$ for any $r$-ball neighborhood.


A careful computation (see Appendix~\ref{sec:appendixRho}) shows
that
\begin{equation}\label{eq:rhoGEQsigma}
\rho >\sigma \,,
\end{equation}
and therefore
\begin{equation}\label{eq:PhiZestimation2}
\phi(Z_{ij})<\frac{ 5F(K)}{\sigma^2} \,.
\end{equation}

Assume that $\frac{m}{q}>2$. Since both $m$ and $q$ are integers, we
have that $m+1\geq 2(q+1)$. Hence, using Eq.~\ref{eq:sigma} we have
\begin{equation*}
    \sigma^2=\frac{m(m+1)}{3}>\frac{4q(q+1)}{3}=4\tau^2\,.
\end{equation*}
Combining this result with Eqs.~\ref{eq:PhiYestimation} and
\ref{eq:PhiZestimation2} we have
\begin{equation*}\frac{m}{q}>2
\Rightarrow \phi(Y_{ij})>\phi(Z_{ij}) \,.
\end{equation*}
which proves Theorem~\ref{thm:grid}.

\subsection{Implications to other algorithms}\label{sec:ImplicationToDFM}
We start with implications regarding DFM. There are two main
differences between LEM and DFM. The first difference is the
choice of the kernel. LEM chooses $w_{i,j}=1$, which can be
referred to as the ``window" kernel~\citep[a Gaussian weight
function was also considered by][]{belkin}. DFM allows a more
general rotation-invariant kernel, which includes the ``window"
kernel of LEM. The second difference is that DFM renormalizes the
weights $k_{\eps}(x_i,x_{i,j})$ (see Eq.~\ref{eq:w_ijForDFM}).
However, for all the inner points of the grid with neighbors that
are also inner points, the renormalization factor
$(q_{\eps}(x_i)^{-1}q_{\eps}(x_{i,j})^{-1})$ is a constant.
Therefore, if DFM chooses the ``window" kernel, it is expected to
fail, like LEM. In other words, when DFM using the ``window''
kernel is applied to a grid with aspect ratio slightly greater
than $2$ or above, DFM will prefer the embedding $Z$ over the
embedding $Y$ (see Fig~\ref{fig:embeddings}). For a more general
choice of kernel, the discussion in
Appendix~\ref{sec:appendixRball} indicates that a similar failure
should occur. This is because the relation between the estimations
of $\Phi(Y)$ and $\Phi(Z)$ presented in
Eqs.~\ref{eq:PhiYestimation} and~\ref{eq:PhiZestimation} holds for
any rotation-invariant kernel (see
Appendix~\ref{sec:appendixRball}). This observation is also
evident in numerical examples, as shown in Figs.~\ref{fig:grid}
and \ref{fig:random}.

In the cases of LLE with no regularization, LTSA, and HLLE, it can
be shown that $\Phi(Y)\equiv 0$. Indeed, for LTSA and HLLE, the
weight matrix $W_i$ projects on a space that is perpendicular to the
SVD of the neighborhood $X_i$, thus $\norm{W_i X_i}_F^2=0$. Since
$Y_i=X_i\Cov(X)^{-1/2}$, we have $\norm{W_i Y_i}_F^2=0$, and,
therefore, $\Phi(Y)\equiv 0$. For the case of LLE with no
regularization, when $K\geq 3$, each point can be reconstructed
perfectly from its neighbors, and the result follows. Hence, a
linear transformation of the original data should be the preferred
output. However, the fact that $\Phi(Y)\equiv 0$ relies heavily on
the assumption that both the input $X$ and the output $Y$ are of the
same dimension (see Theorem~\ref{thm:general} for manifolds embedded
in higher dimensions), which is typically not the case in
dimension-reducing applications.

\subsection{Numerical results}\label{sec:numericalExamplesGrid}
For the following numerical results, we used the Matlab
implementation written by the respective algorithms' authors as
provided by~\cite{ManifoldHompage} (a minor correction was applied
to the code of HLLE).

We ran the LEM algorithm on data sets with aspect ratios above and
below $2$. We present results for both a grid and a uniformly
sampled strip.  The neighborhoods were chosen using $K$-nearest
neighbors with $K=4,8,16$, and $64$. We present the results for
$K=8$; the results for $K=4,16$, and $64$ are similar. The results
for the grid and the random sample are presented in
Figs.~\ref{fig:grid} and~\ref{fig:random}, respectively.

We ran the DFM algorithm on the same data sets. 
We used the normalization constant $\alpha = 1$ and the kernel
width $\sigma = 2$; the results for $\sigma = 1,4$, and $8$ are
similar. The results for the grid and the random sample are
presented in Figures~\ref{fig:grid} and~\ref{fig:random},
respectively.

\begin{figure}[!t]
\vskip 0.2in
\begin{center}
\setlength{\epsfxsize}{3.25in}
\includegraphics[width=0.6\linewidth]{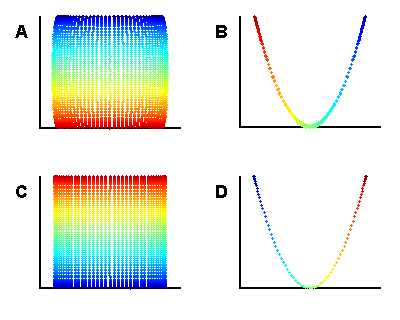}
\caption{The output of LEM on a grid of dimensions $81\times 41$
is presented in~(A). The result of LEM for the grid of dimensions
$81\times 39$ is presented in~(B). The number of neighbors in both
computations is $8$. The output for DFM on the same data sets
using $\sigma=2$ appears in (C) and (D), respectively.}
\label{fig:grid}
\end{center}
\vskip -0.2in
\end{figure}

\begin{figure}[!th]
\vskip 0.2in
\begin{center}
\setlength{\epsfxsize}{3.25in}
\includegraphics[width=0.8\linewidth]{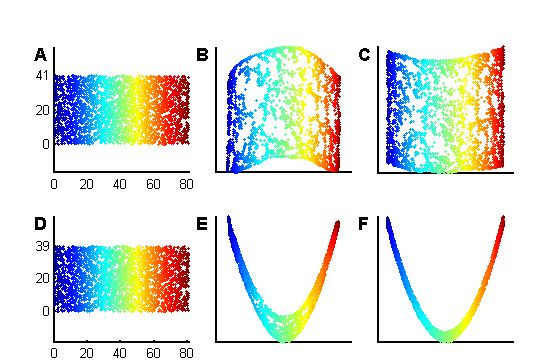}
\caption{(A) and (D) show the same 3000 points, uniformly-sampled
from the unit square, scaled to the areas $[0,81]\times[0,41]$ and
$[0,81]\times[0,39]$, respectively. (B) and (E) show the outputs
of LEM for inputs (A) and (D), respectively. The number of
neighbors is both computations is $8$. (C) and (F) show the output
for DFM on the same data sets using $\sigma=2$. Note the sharp
change in output structure for extremely similar inputs.}
\label{fig:random}
\end{center}
\vskip -0.2in
\end{figure}

Both examples clearly demonstrate that for aspect ratios
sufficiently greater than $2$, both LEM and DFM prefer a solution
that collapses the input data to a nearly one-dimensional output,
normalized in $\R^2$. This is exactly as expected, based on our
theoretical arguments.

Finally, we ran LLE, HLLE, and LTSA on the same data sets. In the
case of the grid, both LLE and LTSA (roughly) recovered the grid
shape for $K=4,8,16$, and $64$, while HLLE failed to produce any
output due to large memory requirements. In the case of the random
sample, both LLE and HLLE succeeded for $K=16,64$ but failed for
$K=4,8$. LTSA succeeded for $K=8,16$, and $64$ but failed for $K=4$.
The reasons for the failure for lower values of $K$ are not clear,
but may be due to roundoff errors. In the case of LLE, the failure
may also be related to the use of regularization in LLE's second
step.

\section{Analysis for general two-dimensional manifolds}\label{sec:general}
The aim of this section is to present necessary conditions for the
success of the \ona~on general two-dimensional manifolds embedded
in high-dimensional space. We show how this result can be further
generalized to manifolds of higher dimension. We demonstrate the
theoretical results using numerical examples.

\subsection{Two different embeddings for a two-dimensional manifold}
We start with some definitions. Let $X=[x_1,\ldots,x_N]', \, x_i\in
\R^2$ be the original sample. Without loss of generality, we assume
that
\begin{equation*}
    \bar{x}=\mathbf{0};\;\;\;\;\;\mathrm{Cov}(X)\equiv \Sigma =\left(
                                    \begin{array}{cc}
                                      \sigma^2 & 0 \\
                                      0 & \tau^2 \\
                                    \end{array}
                                  \right)\,.
\end{equation*}
As in Section~\ref{sec:grid}, we assume that $\sigma>\tau$. Assume
that the input for the \ona~is given by $\psi(X)\subset \R^\D$
where $\psi:\R^2\rightarrow\R^\D$ is a smooth function and $\D\geq
2$ is the dimension of the input. When the mapping $\psi$ is an
isometry, we expect $\Phi(X)$ to be small. We now take a close
look at $\Phi(X)$.
\begin{eqnarray*}
  \Phi(X) = \sum_{i=1}^N \norm{W_i X_i}^2_F=\sum_{i=1}^N \norm{W_i X_i^{(1)}}^2
  +\sum_{i=1}^N \norm{W_i X_i^{(2)}}^2\,,
\end{eqnarray*}
where $X_i^{(j)}$ is the $j$-th column of the neighborhood $X_i$.
Define $e_i^{(j)}= \norm{W_i X_i^{(j)}}^2$, and note that
$e_i^{(j)}$ depends on the different algorithms through the
definition of the matrices $W_i$. The quantity $e_i^{(j)}$ is the
portion of error obtained by using the $j$-th column of the $i$-th
neighborhood when using the original sample as output. Denote
$\bar{e}^{(j)}=\frac{1}{N}\sum_i e_i^{(j)}$, the average error
originating from the $j$-th column.

We define two different embeddings for $\psi(X)$, following the
logic of Sec.~\ref{sec:two_embeddings}. Let
\begin{equation}\label{eq:y_i_general}
    Y=X\Sigma^{-1/2}
\end{equation}
be the first embedding. Note that $Y$ is just the original sample
up to a linear transformation that ensures that the normalization
constraints $\Cov(Y)=I$ and $Y'\textbf{1}=\textbf{0}$ hold.
Moreover, $Y$ is the only transformation of $X$ that satisfies
these conditions, which are the normalization constraints for LLE,
HLLE, and LTSA. In Section~\ref{sec:embeddings_LEM_DFM} we discuss
the modified embeddings for LEM and DFM.

The second embedding, $Z$, is given by
\begin{equation}\label{eq:z_i_general}
   z_{i}=\left\{ \begin{array}{cc}
                 \left(\frac{x_i^{(1)}}{\sigma},
                 \frac{- x_i^{(1)}}{\rho}-\bar{z}^{(2)}\right) & x_i^{(1)}
                 <0\\
                 \\
                 \left(\frac{x_i^{(1)}}{\sigma},
                 \frac{\kappa x_i^{(1)}}{\rho}-\bar{z}^{(2)}\right) & x_i^{(1)}\geq 0
               \end{array}\right.\,.
\end{equation}
Here
\begin{equation}\label{eq:kappa}
    \kappa=\Big(\sum_{i:x_i^{(1)}< 0}\Big(x_i^{(1)}\Big)^2\Big)^{1/2}
    \Big(\sum_{i:x_i^{(1)}\geq 0}\Big(x_i^{(1)}\Big)^2\Big)^{-1/2}\,
\end{equation}
ensures that $\mathrm{Cov}(Z^{(1)},Z^{(2)})=0$, and
$\bar{z}^{(2)}=\frac{1}{N}(\sum_{x_i^{(1)}\geq 0}\frac{ \kappa
x_i^{(1)}}{\rho}+ \sum_{ x_i^{(1)} <0} \frac{- x_i^{(1)}}{\rho})$
and $\rho$ are chosen so that the sample mean and variance of
$Z^{(2)}$ are equal to zero and one, respectively. We assume
without loss of generality that $\kappa \geq 1$.

Note that $Z$ depends only on the first column of $X$. Moreover,
each point $z_i$ is just a linear transformation of $x_i^{(1)}$.
In the case of neighborhoods $Z_i$, the situation can be
different. If the first column of $X_i$ is either non-negative or
non-positive, then $Z_i$ is indeed a linear transformation of
$X_i^{(1)}$. However, if $X_i^{(1)}$ is located on both sides of
zero, $Z_i$ is not a linear transformation of $X_i^{(1)}$. Denote
by $N_0$ the set of indices $i$ of neighborhoods $Z_i$ that are
not linear transformations of $X_i^{(1)}$. The number $|N_0|$
depends on the number of nearest neighbors $K$. Recall that for
each neighborhood, we defined the radius $r(i)=\max_{j,k\in
\{0,\ldots,K\}}\norm{x_{i,j}-x_{i,k}}$. Define
$r_{\max}=\max_{i\in N_0} r(i)$ to be the maximum radius of
neighborhoods $i$, such that $i\in N_0$.

\subsection{The embeddings for LEM and DFM} \label{sec:embeddings_LEM_DFM}
So far we have claimed that given the original sample $X$, we
expect the output to resemble $Y$ (see Eq.~\ref{eq:y_i_general}).
However, $Y$ does not satisfy the normalization constraints of
Eq.~\ref{eq:normalization_constraints} for the cases of LEM and
DFM. Define $\hat{Y}$ to be the only affine transformation of $X$
(up to rotation) that satisfies the normalization constraint of
LEM and DFM. When the original sample is given by $X$, we expect
the output of LEM and DFM to resemble $\hat{Y}$. We note that
unlike the matrix $Y$ that was defined in terms of the matrix $X$
only, $\hat{Y}$ depends also on the choice of neighborhoods
through the matrix $D$ that appears in the normalization
constraints.

We define $\hat{Y}$ more formally. Denote
$\widetilde{X}=X-\frac{1} {\textbf{1}'D\textbf{1}}\textbf{1}
\textbf{1}'DX$. Note that $\widetilde{X}$ is just a translation of
$X$ that ensures that $\widetilde{X}'D\textbf{1}=\textbf{0}$. The
matrix $\widetilde{X}'D\widetilde{X}$ is positive definite and
therefore can be presented by $\Gamma \widehat{\Sigma} \Gamma'$
where $\Gamma$ is a $2\times 2$ orthogonal matrix and
\begin{equation*}
\widehat{\Sigma}=\left(
                   \begin{array}{cc}
                     \hat{\sigma}^2 & 0 \\
                     0 & \hat{\tau}^2 \\
                   \end{array}
                 \right)\,,
\end{equation*}
where $ \hat{\sigma}\geq \hat{\tau}$. Define
$\widehat{X}=\widetilde{X}\Gamma$; then
$\widehat{Y}=\widehat{X}\widehat{\Sigma}^{-1/2}$ is the only
affine transformation of $X$ that satisfies the normalization
constraints of LEM and DFM; namely, we have
$\widehat{Y}'D\widehat{Y}=I$ and
$\widehat{Y}'D\textbf{1}=\textbf{0}$.

We define $\widehat{Z}$ similarly to Eq.~\ref{eq:z_i_general},
\begin{equation*}
   \hat{z}_{i}=\left\{ \begin{array}{cc}
                 \left(\frac{\hat{x}_i^{(1)}}{\hat{\sigma}},
                 \frac{- \hat{x}_i^{(1)}}{\hat{\rho}}-\hat{\bar{z}}^{(2)}\right) & \hat{x}_i^{(1)}
                 <0\\ \\
                  \left(\frac{\hat{x}_i^{(1)}}{\hat{\sigma}},
                 \frac{\hat{\kappa} \hat{x}_i^{(1)}}{\hat{\rho}}-\hat{\bar{z}}^{(2)}\right) & \hat{x}_i^{(1)}\geq 0
               \end{array}\right.\,,
\end{equation*}
where $\hat{\kappa}$ is defined by Eq.~\ref{eq:kappa} with respect
to $\widehat{X}$,
$\hat{\bar{z}}^{(2)}=\frac{1}{N}(\sum_{x_i^{(1)}\geq 0}\frac{
d_{ii}\hat{\kappa} x_i^{(1)}}{\rho}+ \sum_{ x_i^{(1)} <0} \frac{-
 d_{ii} x_i^{(1)}}{\rho})$ and $\hat{\rho}^2=\kappa^2\sum_{\hat{x}_i^{(1)}\geq 0}
d_{ii}\left(\hat{x}_i^{(1)}\right)^2+\sum_{\hat{x}_i^{(1)}\leq 0}
d_{ii}\left(\hat{x}_i^{(1)}\right)^2$.

A similar analysis to that of $Y$ and $Z$ can be performed for
$\widehat{Y}$ and $\widehat{Z}$. The same necessary conditions for
success are obtained, with $\sigma$, $\tau$, and $\rho$ replaced
by $\hat{\sigma}$, $\hat{\tau}$, and $\hat{\rho}$, respectively.
In the case where the distribution of the original points is
uniform, the ratio $\frac{\hat{\sigma}}{\hat{\tau}}$ is close to
the ratio $\frac{\sigma}{\tau}$ and thus the necessary conditions
for the success of LEM and DFM are similar to the conditions in
Corollary~\ref{cor:general}.

\subsection{Characterization of the embeddings}
The main result of this section provides necessary conditions for
the success of the \ona. Following Section~\ref{sec:mainExample},
we say that the algorithms fail if $\Phi(Y)>\Phi(Z)$, where $Y$
and $Z$ are defined in Eqs.~\ref{eq:y_i_general}
and~\ref{eq:z_i_general}, respectively. Thus, a necessary
condition for the success of the \ona~is that $\Phi(Y)\leq
\Phi(Z)$.
\begin{thm}\label{thm:general}
Let $X$ be a sample from a two-dimensional domain and let $\psi(X)$
be its embedding in high-dimensional space. Let $Y$ and $Z$ be
defined as above. Then
\begin{equation*}
   \frac{\kappa ^2}{\rho^2}\left( \bar{e}^{(1)}+\frac{|N_0|}{N}c_a
   r_{\max}^2\right) < \frac{\bar{e}^{(2)}}{\tau^2}\quad\Longrightarrow \quad \Phi(Y)>\Phi(Z) \,,
\end{equation*}
where $c_a$ is a constant that depends on the specific algorithm.
For the algorithms LEM and DFM a more restrictive condition can be
defined:
\begin{equation*}
   \frac{\kappa ^2}{\rho^2}\bar{e}^{(1)} < \frac{\bar{e}^{(2)}}{\tau^2} \quad \Longrightarrow \quad \Phi(Y)>\Phi(Z) \,.
\end{equation*}
\end{thm}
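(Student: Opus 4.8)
The plan is to split the cost of each embedding by its two coordinate columns and to exploit that every weight matrix annihilates constants, $W_i\textbf{1}=\textbf{0}$. This holds for all five algorithms: the rows of $W_i$ sum to zero for LEM/DFM and for LLE, $W_i$ incorporates the centering matrix $H$ for LTSA, and the Hessian estimator is orthogonal to $\textbf{1}$ for HLLE. Writing $\Phi(Y)=\sum_i\norm{W_iY_i^{(1)}}^2+\sum_i\norm{W_iY_i^{(2)}}^2$ and using $Y=X\Sigma^{-1/2}$ together with the linearity of $W_i$, each term equals $e_i^{(1)}/\sigma^2$ or $e_i^{(2)}/\tau^2$, so $\Phi(Y)=\frac{N\bar e^{(1)}}{\sigma^2}+\frac{N\bar e^{(2)}}{\tau^2}$. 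Since $Z^{(1)}=X^{(1)}/\sigma=Y^{(1)}$, the first-column contributions to $\Phi(Y)$ and $\Phi(Z)$ coincide, so the entire comparison reduces to bounding the single term $\sum_i\norm{W_iZ_i^{(2)}}^2$.

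First I would treat the neighborhoods $i\notin N_0$, on which $X_i^{(1)}$ does not change sign. There $Z_i^{(2)}$ is an affine image of $X_i^{(1)}$ with slope $\kappa/\rho$ (non-negative case) or $1/\rho$ (non-positive case); since $W_i\textbf{1}=\textbf{0}$ kills the translation $\bar z^{(2)}$ and $\kappa\geq1$, each such term is at most $\frac{\kappa^2}{\rho^2}e_i^{(1)}$. The remaining neighborhoods $i\in N_0$ are where the kink in the piecewise-linear definition of $Z$ destroys linearity, and this is the crux. For these I would use that a sign-changing neighborhood has every coordinate $\abs{x^{(1)}}\leq r(i)\leq r_{\max}$, so writing $f(t)=\kappa t$ for $t\geq0$ and $f(t)=-t$ for $t<0$, the vector $W_iZ_i^{(2)}=\frac1\rho W_i f(X_i^{(1)})$ (with $f$ applied entrywise) has entries bounded by $\kappa r_{\max}$; bounding by the operator norm gives $\norm{W_iZ_i^{(2)}}^2\leq\frac{\kappa^2}{\rho^2}c_a r_{\max}^2$ for $c_a=\max_i(K+1)\norm{W_i}_{\mathrm{op}}^2$. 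Summing the two cases yields $\sum_i\norm{W_iZ_i^{(2)}}^2\leq\frac{\kappa^2}{\rho^2}(N\bar e^{(1)}+\abs{N_0}c_a r_{\max}^2)$, whence $\Phi(Y)-\Phi(Z)\geq\frac{N\bar e^{(2)}}{\tau^2}-\frac{\kappa^2}{\rho^2}(N\bar e^{(1)}+\abs{N_0}c_a r_{\max}^2)$, which is positive precisely under the stated hypothesis.

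For the sharper LEM/DFM bound I would drop the $N_0$ correction by exploiting the special structure of those $W_i$: their Frobenius norm decomposes over edges, $\norm{W_iZ_i^{(2)}}^2=\sum_j w_{i,j}(z_i^{(2)}-z_{i,j}^{(2)})^2$. The map $f$ is globally Lipschitz with constant $\max(\kappa,1)=\kappa$, so $(z_i^{(2)}-z_{i,j}^{(2)})^2=\frac1{\rho^2}(f(x_i^{(1)})-f(x_{i,j}^{(1)}))^2\leq\frac{\kappa^2}{\rho^2}(x_i^{(1)}-x_{i,j}^{(1)})^2$ for every edge, regardless of whether the neighborhood straddles zero. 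Summing edge by edge gives $\sum_i\norm{W_iZ_i^{(2)}}^2\leq\frac{\kappa^2}{\rho^2}N\bar e^{(1)}$ with no $N_0$ term, and the more restrictive implication follows.

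The main obstacle is clearly the set $N_0$: the global bound $\abs{f(a)-f(b)}\leq\kappa\abs{a-b}$ that settles everything for LEM/DFM cannot be applied neighborhood-wise for LTSA, HLLE and LLE, because their $W_i$ couple all neighbors at once rather than through pairwise differences. Hence the nonlinearity of $Z$ on sign-changing neighborhoods genuinely must be paid for, and the cleanest way to pay is the crude, dimension-insensitive estimate $\abs{x^{(1)}}\leq r_{\max}$ that is valid exactly on the neighborhoods in $N_0$.
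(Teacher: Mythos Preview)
Your proof is correct and follows essentially the same route as the paper: the same column decomposition of $\Phi(Y)$ and $\Phi(Z)$, the same affine/$W_i\textbf{1}=\textbf{0}$ argument on the neighborhoods $i\notin N_0$, and the identical edgewise Lipschitz estimate for the sharper LEM/DFM bound. The only cosmetic difference is on $N_0$: the paper bounds the \emph{diameter} of $Z_i^{(2)}$ by $\kappa r(i)/\rho$ via the $\kappa$-Lipschitz property of $f$ and then invokes a separate lemma showing $\norm{W_iV}_F^2\leq c_a\,\mathrm{diam}(V)^2$ with explicit algorithm-dependent constants, whereas you bound the \emph{entries} of $f(X_i^{(1)})$ using the sign-change and apply an operator-norm estimate --- both give the same $\frac{\kappa^2}{\rho^2}c_a r_{\max}^2$ per neighborhood.
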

For the proof, see Appendix~\ref{sec:appendixGeneralTheorem}.

Adding some assumptions, we can obtain a simpler criterion. First
note that, in general, $\bar{e}^{(1)}$ and $\bar{e}^{(2)}$ should
be of the same order, since it can be assumed that, locally, the
neighborhoods are uniformly distributed. Second, following
Lemma~\ref{lem:absX} (see Appendix~\ref{sec:appendixLemAbsX}),
when $X^{(1)}$ is a sample from a symmetric unimodal distribution
it can be assumed that $\kappa\approx 1$ and
$\rho^2>\frac{\sigma^2}{8}$. Then we have the following corollary:
\begin{cor}\label{cor:general}
Let $X,Y,Z$ be as in  Theorem~\ref{thm:general}. Let
$c=\sigma/\tau$ be the ratio between the variance of the first and
second columns of $X$. Assume that
$\bar{e}^{(1)}<\sqrt{2}\bar{e}^{(2)}$, $\kappa<\sqrt[4]{2}$, and
$\rho^2>\frac{\sigma^2}{8}$. Then
\begin{equation*}
   4\left(1+\frac{|N_0|}{N}\frac{c_a
   r_{\max}^2}{\sqrt{2}\bar{e}^{(2)}}\right)<c \Rightarrow \Phi(Y)>\Phi(Z)\,.
\end{equation*}
For LEM and DFM, we can write
\begin{equation*}
   4<c \Rightarrow \Phi(Y)>\Phi(Z)\,.
\end{equation*}
\end{cor}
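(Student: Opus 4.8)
The plan is to derive the corollary directly from Theorem~\ref{thm:general} by showing that the corollary's hypothesis implies the theorem's hypothesis
\[
\frac{\kappa^2}{\rho^2}\left(\bar{e}^{(1)}+\frac{|N_0|}{N}c_a r_{\max}^2\right) < \frac{\bar{e}^{(2)}}{\tau^2}\,;
\]
the conclusion $\Phi(Y)>\Phi(Z)$ then follows immediately from the theorem. Since the theorem is an implication, nothing about $\Phi$ itself needs to be revisited: the entire task is to bound the left-hand side of the theorem's condition from above using the three quantitative assumptions $\kappa<\sqrt[4]{2}$, $\rho^2>\sigma^2/8$, and $\bar{e}^{(1)}<\sqrt{2}\,\bar{e}^{(2)}$, and to check that the corollary's scalar inequality in $c=\sigma/\tau$ forces that upper bound below $\bar{e}^{(2)}/\tau^2$.

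First I would bound the prefactor: $\kappa<\sqrt[4]{2}$ gives $\kappa^2<\sqrt2$, and $\rho^2>\sigma^2/8$ gives $1/\rho^2<8/\sigma^2$, so $\kappa^2/\rho^2<8\sqrt2/\sigma^2$. Next, using $\bar{e}^{(1)}<\sqrt2\,\bar{e}^{(2)}$ inside the parenthesis (the prefactor being positive) and factoring $\sqrt2\,\bar{e}^{(2)}$ out, the left-hand side is bounded above by $\frac{16\,\bar{e}^{(2)}}{\sigma^2}A$, where $A=1+\frac{|N_0|}{N}\frac{c_a r_{\max}^2}{\sqrt2\,\bar{e}^{(2)}}$ is exactly the bracketed factor in the corollary. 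Comparing this against $\bar{e}^{(2)}/\tau^2$, cancelling the positive $\bar{e}^{(2)}$, and using $c^2=\sigma^2/\tau^2$, the theorem's condition is implied by $16A\le c^2$; in the LEM and DFM case the $r_{\max}$ term is absent, so $A=1$ and the requirement is simply $16\le c^2$.

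The one step needing care — and the only genuinely non-mechanical point — is passing from the corollary's hypothesis $4A<c$ to the inequality $16A<c^2$ just obtained. This is not a plain squaring, since squaring $4A<c$ yields $16A^2<c^2$ rather than $16A<c^2$. The resolution is that $A\ge1$ always, because $A$ is $1$ plus a nonnegative quantity; hence $A^2\ge A$ and so $16A\le16A^2<c^2$, as needed. This is also precisely where the factor $4$ in the stated criterion (as opposed to the factor $16$ produced by the raw substitution) originates: the square root is absorbed by $A\ge1$. For LEM and DFM the identical chain applies with $A=1$, where $4<c$ gives $16<c^2$ directly, recovering $4<c\Rightarrow\Phi(Y)>\Phi(Z)$. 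I anticipate no real obstacle beyond keeping the directions of the inequalities and the strict/non-strict distinctions consistent through the substitution.
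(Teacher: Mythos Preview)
Your proposal is correct and is exactly the intended derivation: the paper states the corollary without proof, presenting it as the simplification of Theorem~\ref{thm:general} obtained by substituting the three quantitative assumptions, and your chain of bounds (including the observation that $A\ge1$ so $16A\le16A^2<c^2$) is precisely what is needed to make that substitution rigorous.
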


We emphasize that both Theorem~\ref{thm:general} and
Corollary~\ref{cor:general} do not state that $Z$ is the output of
the \ona. However, when the difference between the right side and
the left side of the inequalities is large, one cannot expect the
output to resemble the original sample (see
Lemma~\ref{lem:criterion_failure}). In these cases we say that the
algorithms fail to recover the structure of the original domain.

\subsection{Generalization of the results to manifolds of higher dimensions}
The discussion above introduced necessary conditions for the \ona'
success on two-dimensional manifolds embedded in $\R^\D$.
Necessary conditions for success on general $d$-dimensional
manifolds, $d\geq 3$, can also be obtained. We present here a
simple criterion to demonstrate the fact that there are
$d$-dimensional manifolds that the \ona~cannot recover.

Let $X=[X^{(1)},\ldots,X^{(d)}]$ be a $N\times d$ sample from a
$d$-dimensional domain. Assume that the input for the \ona~is given
by $\psi(X)\subset \R^\D$ where $\psi:\R^d\rightarrow\R^\D$ is a
smooth function and $\D\geq d$ is the dimension of the input. We
assume without loss of generality that $X'\textbf{1}=\textbf{0}$ and
that $\textrm{Cov}(X)$ is a diagonal matrix. Let
$Y=X\textrm{Cov}(X)^{-1/2}$. We define the matrix
$Z=[Z^{(1)},\ldots,Z^{(d)}]$ as follows. The first column of $Z$,
$Z^{(1)}$, equals the first column of $Y$, namely,
$Z^{(1)}=Y^{(1)}$. We define the second column $Z^{(2)}$ similarly
to the definition in Eq.~\ref{eq:z_i_general}:
\begin{equation*}
   Z^{(2)}_i=\left\{ \begin{array}{cc}
                 \frac{- x_i^{(1)}}{\rho}-\bar{z}^{(2)} & x_i^{(1)}
                 <0\\ \\
                  \frac{\kappa x_i^{(1)}}{\rho}-\bar{z}^{(2)} & x_i^{(1)}\geq 0
               \end{array}\right.\,,
\end{equation*}
where $\kappa$ is defined as in Eq.~\ref{eq:kappa}, and
$\bar{z}^{(2)}$ and $\rho$ are chosen so that the sample mean and
variance of $Z^{(2)}$ are equal to zero and one, respectively. We
define the next $d-2$ columns of $Z$ by
\begin{equation*}
Z^{(j)}=\frac{Y^{(j)}-\sigma_{2j}Z^{(2)}}{\sqrt{1-\sigma_{2j}^2}}\,;\quad
j=3,\ldots,d\,,
\end{equation*}
where $\sigma_{2j}=Z^{(2)}{}'Y^{(j)}$. Note that
$Z'\textbf{1}=\textbf{0}$ and $\textrm{Cov}(Z)=I$. Denote
$\sigma_{\max}=\max_{j\in\{3,\ldots,d\}}\sigma_{2j}$.

We bound $\Phi(Z)$ from above:
\begin{eqnarray*}
  \Phi(Z) &=&
   \Phi(Y^{(1)})+\Phi(Z^{(2)})+\sum_{i=1}^N
   \left(\frac{1}{1-\sigma_{2j}^2}\right)
   \sum_{j=3}^d\norm{W_i \left(Y_i^{(j)}-\sigma_{2j}Z_i^{(2)}\right)}^2  \\
   &\leq&  \Phi(Y^{(1)})+\Phi(Z^{(2)})+\frac{1}{1-\sigma_{\max}^2}\sum_{i=1}^N
   \sum_{j=3}^d\norm{W_i  Y_i^{(j)}}^2+\frac{\sigma_{\max}^2}{1-\sigma_{\max}^2}\sum_{i=1}^N
   \sum_{j=3}^d\norm{W_i  Z_i^{(2)}}^2\\
   &=&
   \Phi(Y^{(1)})+\frac{1+(d-3)\sigma_{\max}^2}{1-\sigma_{\max}^2}\Phi(Z^{(2)})
   +\frac{1}{1-\sigma_{\max}^2}\sum_{j=3}^d \Phi(Y^{(j)})\,.
\end{eqnarray*}
Since we may write $\Phi(Y)=\sum_{j=1}^d\Phi(Y^{(j)})$, we have
\begin{equation*}
\frac{1+(d-3)\sigma_{\max}^2}{1-\sigma_{\max}^2}\Phi(Z^{(2)})<\Phi(Y^{(2)})+
\frac{\sigma_{\max}^2}{1-\sigma_{\max}^2}\sum_{j=3}^d
\Phi(Y^{(j)})\Rightarrow \Phi(Z)<\Phi(Y)\,.
\end{equation*}
\begin{figure}[!ht]
\vskip 0.2in
\begin{center}
\includegraphics[width=0.65\linewidth]{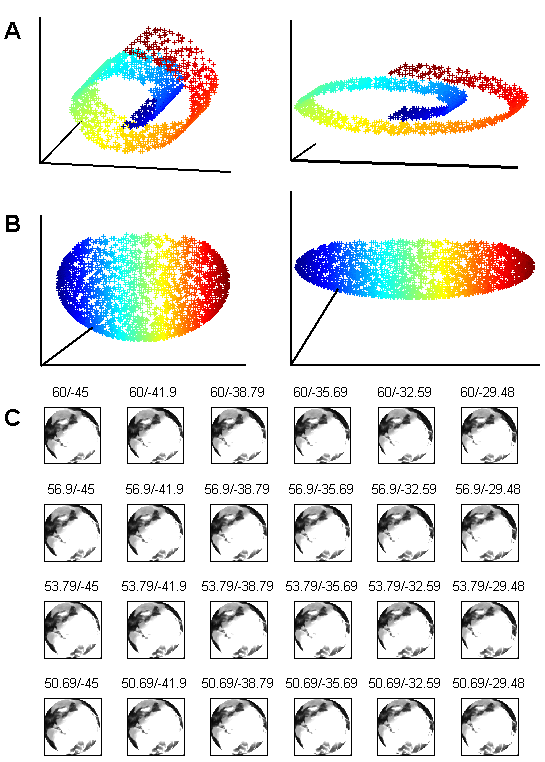}
\caption{The data sets for the first example appear in panel A. In
the left appears the $1600$-point original swissroll and in the
right appears the same swissroll, after its first dimension was
stretched by a factor of $3$. The data for the second example appear
in panel B. In the left appears a $2400$-point uniform sample from
the ``fishbowl", and in the right appears the same ``fishbowl",
after its first dimension was stretched by a factor of $4$. In panel
C appears the upper left corner of the array of $100\times100$ pixel
images of the globe. Above each image we write the elevation and
azimuth. } \label{fig:inputs}
\end{center}
\vskip -0.2in
\end{figure}

When the sample is taken from a symmetric distribution with respect
to the axes, one can expect $\sigma_{\max}$ to be small. In the
specific case of the $d$-dimensional grid, $\sigma_{\max}=0$.
Indeed, $Y^{(j)}$ is symmetric around zero, and all values of
$Z^{(2)}$ appear for a given value of $Y^{(j)}$. Hence, both LEM and
DFM are expected to fail whenever the ratio between the length of
the grid in the first and second coordinates is slightly greater
than $2$ or more, regardless of the length of grid in the other
coordinates, similar to the result presented in
Theorem~\ref{thm:grid}. Corresponding results for the other \ona~can
also be obtained, similar to the derivation of
Corollary~\ref{cor:general}.

\subsection{Numerical results}\label{sec:numericalExamplesGeneral} We
ran all five \ona, along with Isomap, on three data sets. We used
the Matlab implementations written by the algorithms' authors as
provided by~\cite{ManifoldHompage}.

The first data set is a $1600$-point sample from the swissroll as
obtained from~\cite{ManifoldHompage}. The results for the swissroll
are given in Fig.~\ref{fig:swissroll12}, A1-F1. The results for the
same swissroll, after its first dimension was stretched by a factor
$3$, are given in Fig.~\ref{fig:swissroll12}, A2-F2. The original
and stretched swissrolls are presented in Fig.~\ref{fig:inputs}A.
The results for $K=8$ are given in Fig.~\ref{fig:swissroll12}. We
also checked for $K=12,16$; but ``short-circuits"
occur~\citep[see][for a definition and discussion of
``short-circuits"]{IsomapShortCircuits}.

\begin{figure}[!th]
\vskip 0.2in
\begin{center}
\includegraphics[width=0.9\linewidth]{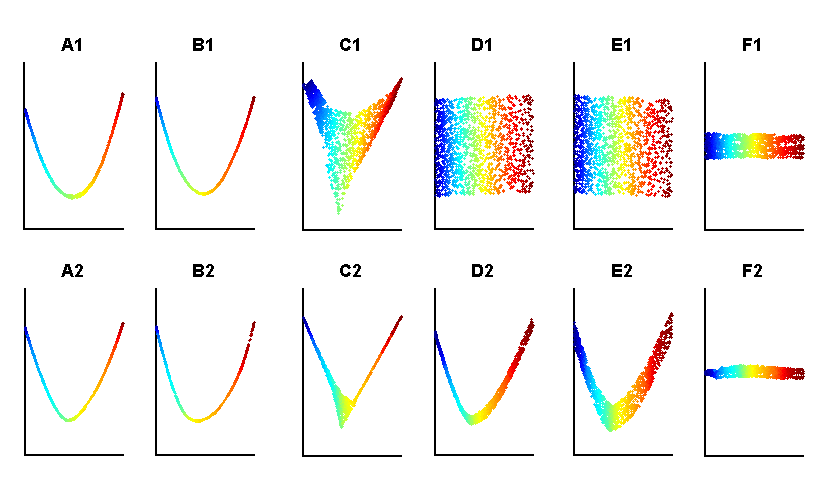}
\caption{The output of LEM on $1600$ points sampled from a
swissroll is presented in A1. The output of LEM on the same
swissroll after stretching its first dimension by a factor of $3$
is presented in A2. Similarly, the outputs of DFM, LLE, LTSA,
HLLE, and Isomap are presented in B1-2, C1-2, D1-2, E1-2, and
F1-2, respectively. We used $K=8$ for all algorithms except DFM,
where we used $\sigma=2$.} \label{fig:swissroll12}
\end{center}
\vskip -0.2in
\end{figure}

The second data set consists of $2400$ points, uniformly sampled
from a ``fishbowl", where a ``fishbowl" is a two-dimensional sphere
minus a neighborhood of the northern pole (see
Fig.~\ref{fig:inputs}B for both the ``fishbowl" and its stretched
version). The results for $K=8$ are given in
Fig.~\ref{fig:fishbowl12}. We also checked for $K=12,16$; the
results are roughly similar. Note that the ``fishbowl" is a
two-dimensional manifold embedded in $\R^3$, which is not an
isometry. While our theoretical results were proved under the
assumption of isometry, this example shows that the \ona~prefer to
collapse their output even in more general settings.

\begin{figure}[!th]
\vskip 0.2in
\begin{center}
\includegraphics[width=0.9\linewidth]{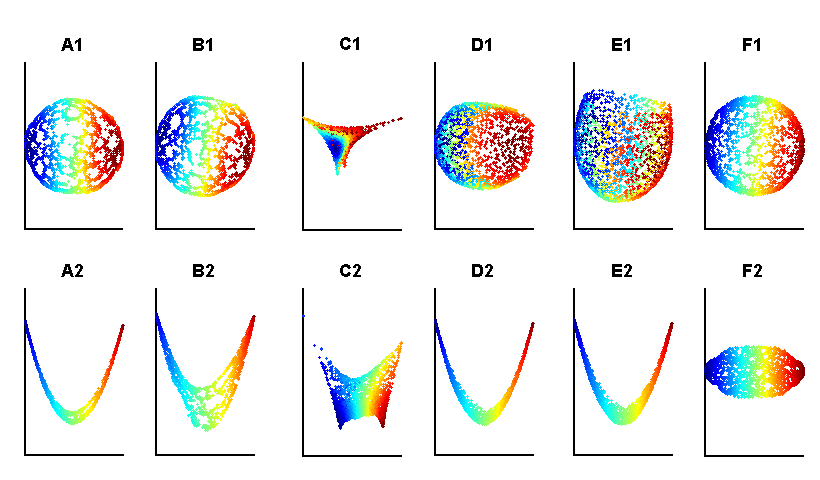}
\caption{The output of LEM on $2400$ points sampled from a
``fishbowl" is presented in A1. The output of LEM on the same
``fishbowl" after stretching its first dimension by a factor of $4$
is presented in A2. Similarly, the outputs of DFM, LLE, LTSA, HLLE,
and Isomap are presented in B1-2, C1-2, D1-2, E1-2, and F1-2,
respectively. We used $K=8$ for all algorithms except DFM, where we
used $\sigma=2$.} \label{fig:fishbowl12}
\end{center}
\vskip -0.2in
\end{figure}

The third data set consists of $900$ images of the globe, each of
$100\times100$ pixels (see Fig.~\ref{fig:inputs}C). The images,
provided by~\cite{earthDataset}, were obtained by changing the
globe's azimuthal and elevation angles. The parameters of the
variations are given by a $30\times 30$ array that contains $-45$ to
$45$ degrees of azimuth and $-30$ to $60$ degrees of elevation. We
checked the algorithms both on the entire set of images and on a
strip of $30\times 10$ angular variations. The results for $K=8$ are
given in Fig.~\ref{fig:earth12}. We also checked for $K=12,16$; the
results are roughly similar.

\begin{figure}[!t]
\vskip 0.2in
\begin{center}
\includegraphics[width=0.9\linewidth]{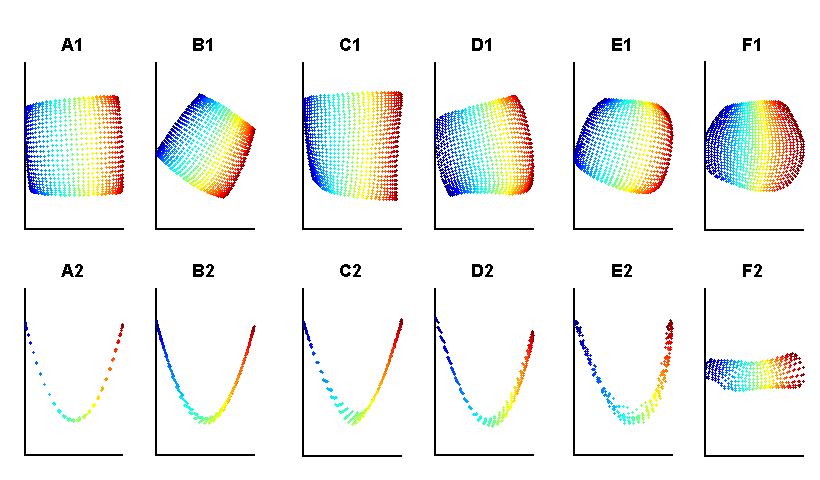}
\caption{The output of LEM on the $30\times 30$ array of the globe
rotation images is presented in A1; the output of LEM on the array
of $30\times 10$ is presented in A2. Similarly, the outputs of
DFM, LLE, LTSA, HLLE, and Isomap are presented in B1-2, C1-2,
D1-2, E1-2, and F1-2 respectively. We used $K=8$ for all
algorithms except DFM, where we chose $\sigma$ to be the root of
the average distance between neighboring points.
\label{fig:earth12}}
\end{center}
\vskip -0.2in
\end{figure}

These three examples, in addition to the noisy version of the
two-dimensional strip discussed in Section~\ref{sec:mainExample}
(see Fig.~\ref{fig:LTSAexample}), clearly demonstrate that when the
aspect ratio is sufficiently large, all the \ona~prefer to collapse
their output.

\section{Asymptotics}\label{sec:asymptotics}
In the previous sections we analyzed the phenomenon of global
distortion obtained by the \ona~on a finite input sample. However,
it is of great importance to explore the limit behavior of the
algorithms, i.e., the behavior when the number of input points
tends to infinity. We consider the question of convergence in the
case of input that consists of a $d$-dimensional manifold embedded
in $\R^\D$, where the manifold is isometric to a convex subset of
Euclidean space. By convergence we mean recovering the original
subset of $\R^d$ up to a non-singular affine transformation.

Some previous theoretical works presented results related to the
convergence issue. \cite{LTSAConvergence1} proved convergence of
LTSA under some conditions. However, to the best of our knowledge,
no proof or contradiction of convergence has been given for any
other of the \ona. In this section we discuss the various
algorithms separately. We also discuss the influence of noise on
the convergence. Using the results from previous sections, we show
that there are classes of manifolds on which the \ona~cannot be
expected to recover the original sample, not even asymptotically.

\subsection{LEM and DFM}
Let $x_1,x_2,\ldots$ be a uniform sample from the two-dimensional
strip $S = [0,L]\times [0,1]$. Let $X_n=[x_1,\ldots,x_n]'$ be the
sample of size $n$. Let $K=K(n)$ be the number of nearest
neighbors. Then when $K\ll n$ there exists with probability one an
$N$, such that for all $n>N$ the assumptions of
Corollary~\ref{cor:general} hold. Thus, if $L>4$ we do not expect
either LEM or DFM to recover the structure of the strip as the
number of points in the sample tends to infinity. Note that this
result does not depend on the number of neighbors or the width of
the kernel, which can be changed as a function of the number of
points $n$, as long as $K \ll n$. Hence, we conclude that LEM and
DFM generally do not converge, regardless of the choice of
parameters.

In the rest of this subsection we present further explanations
regarding the failure of LEM and DFM based on the asymptotic
behavior of the graph Laplacian \citep[see][for details]{belkin}.
Although it was not mentioned explicitly in this paper, it is well
known that the outputs of LEM and DFM are highly related to the
lower non-negative eigenvectors of the normalized graph Laplacian
matrix (see Appendix~\ref{sec:appendixAlgorithms}). It was shown
by~\citet{TowardsBelkin}, \citet{FromGraph}, and
\citet{FromGraphConvergence} that the graph Laplacian operator
converges to the continuous Laplacian operator. Thus, taking a
close look at the eigenfunctions of the continuous Laplacian
operator may reveal some additional insight into the behavior of
both LEM and DFM.

Our working example is the two-dimensional strip $S = [0,L]\times
[0,1]$, which can be considered as the continuous counterpart of the
grid $X$ defined in Section~\ref{sec:grid}. Following~\cite{DFM} we
impose the Neumann boundary condition (see details therein). The
eigenfunctions $\varphi_{i,j}(x_1,x_2)$ and eigenvalues
$\lambda_{i,j}$ on the strip $S$ under these conditions are given by
\begin{equation*}
\varphi_{i,j}(x_1,x_2)=\cos\left(\frac{i\pi}{L}x_1\right)\cos\left(j\pi
x_2\right)\;\;\;\lambda_{i,j}=\left(\frac{i\pi}{L}\right)^2+\left(j\pi\right)^2\;\;\;
\mathrm{for}\;i,j=0,1,2,\ldots\,.
\end{equation*}
When the aspect ratio of the strip satisfies $L>M\in\mathbb{N}$, the
first $M$ non-trivial eigenfunctions are
$\varphi_{i,0},\;i=1,\ldots,M$, which are functions only of the
first variable $x_1$. Any embedding of the strip based on the first
$M$ eigenfunctions is therefore a function of only the first
variable $x_1$. Specifically, whenever $L>2$ the two-dimensional
embedding is a function of the first variable only, and therefore
clearly cannot establish a faithful embedding of the strip. Note
that here we have obtained the same ratio constant $L>2$ computed
for the grid (see Section~\ref{sec:grid} and Figs.~\ref{fig:grid}
and \ref{fig:random}) and not the looser constant $L>4$ that was
obtained in Corollary~\ref{cor:general} for general manifolds.

\subsection{LLE, LTSA and HLLE}
As mentioned in the beginning of this section,
\citet{LTSAConvergence1} proved the convergence of the LTSA
algorithm. The authors of HLLE proved that the continuous manifold
can be recovered by finding the null space of the continuous
Hessian operator \citep[see][Corollary]{HessianEigenMap}. However,
this is not a proof that the algorithm HLLE converges. In the
sequel, we try to understand the relation between
Corollary~\ref{cor:general} and the convergence proof of LTSA.

Let $x_1,x_2,\ldots$ be a sample from a compact and convex domain
$\Omega$ in $\R^2$. Let $X_n=[x_1,\ldots,x_n]'$ be the sample of
size $n$. Let $\psi$ be an isometric mapping from $\R^2$ to
$\R^\D$, where $\D>2$. Let $\psi(X_n)$ be the input for the
algorithms. We assume that there is an $N$ such that for all $n>N$
the assumptions of Corollary~\ref{cor:general} hold. This
assumption is reasonable, for example, in the case of a uniform
sample from the strip $S$. In this case
Corollary~\ref{cor:general} states that $\Phi(Z_n)<\Phi(Y_n)$
whenever
\begin{equation*}
   4\left(1+\frac{|n_0|}{n}\frac{c_a
   r_{\max,n}^2}{\sqrt{2}\bar{e}_n^{(2)}}\right)<c_n\,,
\end{equation*}
where $c_n$ is the ratio between the variance of $X_n^{(1)}$ and
$X_n^{(2)}$ assumed to converge to a constant $c$. The expression
$\frac{|n_0|}{n}$ is the fraction of neighborhoods $X_{i,n}$ such
that $X_{i,n}^{(1)}$ is located on both sides of zero. $r_{\max,n}$
is the maximum radius of neighborhood in $n_0$. Note that we expect
both $\frac{|n_0|}{n}$ and $r_{\max,n}$ to be bounded whenever the
radius of the neighborhoods does not increase. Thus,
Corollary~\ref{cor:general} tells us that if $\{\bar{e}_n^{(2)}\}$
is bounded from below, we cannot expect convergence from LLE, LTSA
or HLLE when $c$ is large enough.

The consequence of this discussion is that a necessary condition for
the convergence of LLE, LTSA and HLLE is that $\{\bar{e}_n^{(2)}\}$
(and hence, from the assumptions of Corollary~\ref{cor:general},
also $\{\bar{e}_n^{(1)}\}$) converges to zero. If the
two-dimensional manifold $\psi(\Omega)$ is not contained in a linear
two-dimensional subspace of $\R^\D$, the mean error
$\bar{e}_n^{(2)}$ is typically not zero due to curvature. However,
if the radii of the neighborhoods tend to zero while the number of
points in each neighborhood tends to infinity, we expect
$\bar{e}_n^{(2)}\rightarrow 0$ for both LTSA and HLLE. This is
because the neighborhood matrices $W_i$ are based on the linear
approximation of the neighborhood as captured by the neighborhood
SVD. When the radius of the neighborhood tends to zero, this
approximation gets better and hence the error tends to zero. The
same reasoning works for LLE, although the use of regularization in
the second step of LLE may prevent $\bar{e}_n^{(2)}$ from converging
to zero (see Section~\ref{sec:algo}).

We conclude that a necessary condition for convergence is that the
radii of the neighborhoods tend to zero. In the presence of noise,
this requirement cannot be fulfilled. Assume that each input point
is of the form  $\psi(x_i)+\varepsilon_i$ where
$\varepsilon_i\in\R^\D$ is a random error that is independent of
$\varepsilon_j$ for $j\neq i$. We may assume that
$\varepsilon_{i}\sim N(0,\alpha^2 I)$, where $\alpha$ is a small
constant. If the radius of the neighborhood is smaller than
$\alpha$, the neighborhood cannot be approximated reasonably by a
two-dimensional projection. Hence, in the presence of noise of a
constant magnitude, the radii of the neighborhoods cannot tend to
zero. In that case, LLE, LTSA and HLLE might not converge, depending
on the ratio $c$. This observation seems to be known also
to~\citeauthor{LTSAConvergence1}, who wrote:
\begin{quote}
``... we assume $\alpha=o(r)$; i.e., we have
$\frac{\alpha}{r}\rightarrow 0$, as $ r \rightarrow 0$.

It is reasonable to require that the error bound ($\alpha$) be
smaller than the size of the neighborhood ($r$), which is reflected
in the above condition. Notice that this condition is also somewhat
nonstandard, since the magnitude of the errors is assumed to depend
on $n$, but it seems to be necessary to ensure the consistency of
LTSA."\footnote{We replaced the original $\tau$ and $\sigma$ with
$r$ and $\alpha$ respectively to avoid confusion with previous
notations. }
 \end{quote}

Summarizing, convergence may be expected when $n\rightarrow \infty$,
if no noise is introduced. If noise is introduced and if
$\sigma/\tau$ is large enough (depending on the level of noise
$\alpha$), convergence cannot be expected (see
Fig.~\ref{fig:LTSAexample}).

\section{Concluding remarks}\label{sec:discussion}

In the introduction to this paper we posed the following question:
Do the normalized-output algorithms succeed in revealing the
underlying low-dimensional structure of manifolds embedded in
high-dimensional spaces? More specifically, does the output of the
normalized-output algorithms resemble the original sample up to
affine transformation?

The answer, in general, is no. As we have seen,
Theorem~\ref{thm:general} and Corollary~\ref{cor:general} show that
there are simple low-dimensional manifolds, isometrically embedded
in high-dimensional spaces, for which the normalized-output
algorithms fail to find the appropriate output. Moreover, the
discussion in Section~\ref{sec:asymptotics} shows that when noise is
introduced, even of small magnitude, this result holds
asymptotically for all the normalized-output algorithms. We have
demonstrated these results numerically for four different examples:
the swissroll, the noisy strip, the (non-isometrically embedded)
``fishbowl", and a real-world data set of globe images. Thus, we
conclude that the use of the normalized-output algorithms on
arbitrary data can be problematic.

The main challenge raised by this paper is the need to develop
manifold-learning algorithms that have low computational demands,
are robust against noise, and have theoretical convergence
guarantees. Existing algorithms are only partially successful:
normalized-output algorithms are efficient, but are not guaranteed
to converge, while Isomap is guaranteed to converge, but is
computationally expensive. A possible way to achieve all of the
goals simultaneously is to improve the existing normalized-output
algorithms. While it is clear that, due to the normalization
constraints, one cannot hope for geodesic distances preservation nor
for neighborhoods structure preservation, success as measured by
other criteria may be achieved. A suggestion of improvement for LEM
appears in~\citet{gerber}, yet this improvement is both
computationally expensive and lacks a rigorous consistency proof. We
hope that future research finds additional ways to improve the
existing methods, given the improved understanding of the underlying
problems detailed in this paper.



\acks{We are grateful to the anonymous reviewers of present and
earlier versions of this manuscript for their helpful suggestions.
We thank an anonymous referee for pointing out errors in the proof
of Lemma~\ref{lem:criterion_failure}. We thank J. Hamm for providing
the database of globe images. This research was supported in part by
Israeli Science Foundation grant and in part by NSF, grant
DMS-0605236.}

\appendix
\section{Detailed proofs and discussions}

\subsection{The equivalence of the algorithms' representations}\label{sec:appendixAlgorithms}
For LEM, note that according to our representation, one needs to
minimize
\begin{equation*}
    \Phi(Y)=\sum_{i=1}^N\norm{W_iY_i}^2_F=\sum_{i=1}^N\sum_{j=1}^Kw_{i,j}\|y_i-y_{i,j}\|^2\,,
\end{equation*}
under the constraints $Y'D\textbf{1}=\textbf{0}$ and $Y'DY=I$.
Define $\hat{w}_{rs}=w_{r,j}$ if $s$ is the $j$-th neighbor of $r$
and zero otherwise. Define $\hat{D}$ to be the diagonal matrix such
that $d_{rr}=\sum_{s=1}^N \hat{w}_{rs}$; note that $\hat{D}=D$.
Using these definitions, one needs to minimize
$\Phi(Y)=\sum_{r,s}\hat{w}_{rs}\|y_r-y_s\|^2$ under the constraints
$Y'\hat{D}\textbf{1}=\textbf{0}$ and $Y'\hat{D}Y=I$, which is the
the authors' representation of the algorithm.

For DFM, as for LEM, we define the weights $\hat{w}_{rs}$. Define
the $N\times N$ matrix $\hat{W}=\left(\hat{w}_{rs}\right)$. Define
the matrix $D^{-1}\hat{W}$; note that this matrix is a Markovian
matrix and that $v^{(0)}\equiv\textbf{1}$ is its eigenvector
corresponding to eigenvalue $1$, which is the largest eigenvalue of
the matrix. Let $v^{(p)}$, $p=1,\ldots,d$ be the next $d$
eigenvectors, corresponding to the next $d$ largest eigenvalues
$\lambda_p$, normalized such that $v^{(p)}{}'Dv^{(p)}=1$. Note that
the vectors $v^{(0)},\ldots,v^{(d)}$ are also the eigenvectors of
$I-D^{-1}W$ corresponding to the $d+1$ lowest eigenvalues. Thus, the
matrix $[v^{(1)},\ldots, v^{(d)}]$ (up to rotation) can be computed
by minimizing $\tr{Y'(D-W)Y}$ under the constraints $Y'DY=I$ and
$Y'D\textbf{1}=\textbf{0}$. Simple computation
shows~\citep[see][Eq.~3.1]{belkin} that
$\tr{Y'(D-W)Y}=\frac{1}{2}\sum_{r,s}\hat{w}_{rs}\|y_r-y_s\|^2$. We
already showed that $\Phi(Y)=\sum_{r,s}\hat{w}_{rs}\|y_r-y_s\|^2$.
Hence, minimizing $\tr{Y'(D-W)Y}$ under the constraints $Y'DY=I$ and
$Y'D\textbf{1}=\textbf{0}$ is equivalent to minimizing $\Phi(Y)$
under the same constraints.  The embedding suggested by~\cite{DFM}
(up to rotation) is the matrix $\left[\lambda_1
\frac{v^{(1)}}{\norm{v^{(1)}}},\ldots,\lambda_d
\frac{v^{(d)}}{\norm{v^{(d)}}}\right]$. Note that this embedding can
be obtained from the output matrix $Y$ by a simple linear
transformation.

For LLE, note that according to our representation, one needs to
minimize
\begin{equation*}
    \Phi(Y)=\sum_{i=1}^N\norm{W_iY_i}^2_F=\sum_{i=1}^N\|y_i-\sum_{j=1}^{K}w_{i,j}y_{i,j}\|^2
\end{equation*}
under the constraints $Y'\textbf{1}=\textbf{0}$ and $\Cov(Y)=I$,
which is the minimization problem given by~\cite{LLE}.

The representation of LTSA is similar to the representation that
appears in the original paper, differing only in the weights'
definition. We defined the weights $W_i$
following~\citet{LTSAConvergence1}, who showed that both
definitions are equivalent.

For HLLE, note that according to our representation, one needs to
minimize
\begin{equation*}
    \Phi(Y)=\sum_{i=1}^N\norm{W_iY_i}^2_F=\sum_{i=1}^N \tr{Y_i'H_i'H_i
    Y_i}
\end{equation*}
under the constraint $\Cov(Y)=I$. This is equivalent (up to a
multiplication by $\sqrt(N)$) to minimizing $\tr{Y'\mathcal{H}Y}$
under the constraint $Y'Y=I$, where $\mathcal{H}$ is the matrix that
appears in the original definition of the algorithm. This
minimization can be calculated by finding the $d+1$ lowest
eigenvectors of $\mathcal{H}$, which is the embedding suggested
by~\cite{HessianEigenMap}.

\subsection{Proof of Lemma~\ref{lem:criterion_failure}}\label{sec:appendixLemCriterionFailure}
We begin by estimating $\Phi(\widetilde{Y})$.
\begin{eqnarray}\label{eq:PhiYTilde}
  \Phi(\widetilde{Y}) &=& \sum_{i=1}^N \norm{W_i Y_i+\eps W_i E_i}_F^2= \sum_{i=1}^N \sum_{j=0}^K\norm{W_i y_{i,j}+\eps W_i e_{i,j}}^2\\
  &\geq& \sum_{i=1}^N \sum_{j=0}^K\left(\norm{W_i y_{i,j}}^2-2\eps |(W_i y_{i,j})' W_i
  e_{i,j}|\right) \nonumber\\
&\geq& \sum_{i=1}^N \sum_{j=0}^K \left((1-4\eps)\norm{W_i
y_{i,j}}^2-4\eps \norm{W_i e_{i,j}}^2\right)\nonumber \\
   &=  & (1-4\eps) \sum_{i=1}^N\norm{W_i Y_i}_F^2-4\eps\sum_{i=1}^N\norm{ W_i E_i}_F^2\nonumber\\
   &\geq & (1-4\eps)\Phi(Y)-4\eps \sum_{i=1}^N\norm{
   W_i}_F^2\norm{E_i}_F^2\,,\nonumber
\end{eqnarray}
where $e_{i,j}$ denotes the $j$-th row of $E_i$.

We bound $\norm{W_i}_F^2$ for each of the algorithms by a constant
$C_a$. It can be shown that for LEM and DFM, $C_a\leq2K$; for LTSA,
$C_a\leq K$; for HLLE $C_a\leq \frac{d(d+1)}{2}$. For LLE in the
case of positive weights $w_{i,j}$, we have $C_a\leq 2$. Thus,
substituting $C_a$ in Eq.~\ref{eq:PhiYTilde}, we obtain
\begin{eqnarray*}
\Phi(\widetilde{Y})&\geq &(1-4\eps) \Phi(Y)-4\eps C_a\sum_{i=1}^N
\sum_{j=0}^K \norm{e_{i,j}}^2 \\
&\geq& (1-4\eps) \Phi(Y)-4\eps C_a S\norm{E}_F^2=(1-4\eps)
\Phi(Y)-4\eps C_a S\,.
\end{eqnarray*}
The last inequality holds true since $S$ is the maximum number of
neighborhoods to which a single observation belongs.

\subsection{Proof of Eq.~\ref{eq:sigma}}\label{sec:appendixSigma}
By definition $\sigma^2=\mathrm{Var}(X^{(1)})$ and hence,
\begin{eqnarray*}
  \sigma^2&=&\frac{1}{N}\sum_{i=-m}^{m}\sum_{j=-q}^{q}\left(x_{ij}^{(1)}\right)^2\\
   &=& \frac{1}{(2m+1)(2q+1)}\sum_{i=-m}^{m}\sum_{j=-q}^{q}i^2 \\
   &=& \frac{2}{2m+1}\sum_{i=1}^{m}i^2 \\
   &=&\frac{2}{2m+1} \frac{(2m+1)(m+1)m}{6}\\
   &=&\frac{(m+1)m}{3}\,.
\end{eqnarray*}
The computation for $\tau$ is similar.

\subsection{Estimation of $F(K)$ and $\widetilde{F}(K)$ for a ball of
radius $r$}\label{sec:appendixRball} Calculation of $\phi(Y_{ij})$
for general $K$ can be different for different choices of
neighborhoods. Therefore, we restrict ourselves to estimating
$\phi(Y_{ij})$ when the neighbors are all the points inside an
$r$-ball in the original grid. Recall that $\phi(Y_{ij})$ for an
inner point is equal to the sum of the squared distance between
$y_{ij}$ and its neighbors. The function
\begin{equation*}
    f(x_1,x_2)=\left(\frac{x_1}{\sigma}\right)^2+\left(\frac{x_2}{\tau}\right)^2
\end{equation*}
agrees with the squared distance for points on the grid, where $x_1$
and $x_2$ indicate the horizontal and vertical distances from
$x_{ij}$ in the original grid, respectively. We estimate
$\phi(Y_{ij})$ using integration of $f(x_1,x_2)$ on $B(r)$, a ball
of radius $r$, which yields
 \begin{equation}\label{eq:PhiYestimationInt}
    \phi(Y_{ij}) \approx \hspace{-0.4cm} \int \limits_{(x_1^2+x_2^2)<r^2}
    \hspace{-0.4cm} f(x_1,x_2)\mathrm{d}x_1
    \mathrm{d}x_2
    =\frac{\pi r^4}{4}\left(\frac{1}{\sigma^2}+\frac{1}{\tau^2}\right)\,.
\end{equation}
Thus, we obtain $F(K)\approx \frac{\pi r^4}{4}$.

We estimate $\phi(Z_{ij})$ similarly. We define the continuous
version of the squared distance in the case of the embedding $Z$ by
\begin{equation*}
    g(x_1,x_2)=x_1^2\left(\frac{1}{\sigma^2}+\frac{4}{\rho^2}\right)\,.
\end{equation*}
Integration yields
\begin{equation}\label{eq:PhiZestimationInt}
    \phi(Z_{ij}) \approx \hspace{-0.4cm} \int  \limits_{(x_1^2+x_2^2)<r^2}
    \hspace{-0.4cm} g(x_1,x_2)\mathrm{d}x_1
     \mathrm{d}x_2=\frac{\pi r^4}{4}\left(\frac{1}{\sigma^2}+\frac{4}{\rho^2}\right)\,.
\end{equation}
Hence, we obtain $\widetilde{F}(K)\approx \frac{\pi r^4}{4} $ and
the relations between Eqs.~\ref{eq:PhiYestimation}
and~\ref{eq:PhiZestimation} are preserved for a ball of general
radius.

For DFM, a general rotation-invariant kernel was considered for
the weights. As with Eqs.~\ref{eq:PhiYestimationInt}
and~\ref{eq:PhiZestimationInt}, the approximations of
$\phi(Y_{ij})$ and $\phi(Z_{ij})$ for the general case with
neighborhood radius $r$ are given by
\begin{equation*}
 \int \limits_{(x_1^2+x_2^2)<r^2}\hspace{-0.1cm}f(x_1,x_2)k(x_1,x_2)
 \mathrm{d}x_1 \mathrm{d}x_2=\left(\pi\hspace{-0.1cm} \int \limits_{0<t<r}
 \hspace{-0.1cm}
 k(t^2)t^3\mathrm{d}t\right)\left(\frac{1}{\sigma^2}+\frac{1}{\tau^2}\right)
\end{equation*}
and
\begin{equation*}
\int \limits_{(x_1^2+x_2^2)<r^2}\hspace{-0.1cm}
g(x_1,x_2)k(x_1,x_2)\mathrm{d}x_1 \mathrm{d}x_2=\left(\pi
\hspace{-0.1cm} \int \limits_{0<t<r} \hspace{-0.1cm}
 k(t^2)t^3\mathrm{d}t\right)\left(\frac{1}{\sigma^2}+\frac{4}{\rho^2}\right)\,.
\end{equation*}
Note that the ratio between these approximations of $\phi(Y_{ij})$
and $\phi(Z_{ij})$ is preserved. In light of these computations it
seems that for the general case of rotation-invariant kernels,
$\phi(Y_{ij})>\phi(Z_{ij})$ for aspect ratio sufficiently greater
than $2$.


\subsection{Proof of Eq.~\ref{eq:rhoGEQsigma}}\label{sec:appendixRho}
Direct computation shows that
\begin{equation*}
    \bar{z}^{(2)}=\frac{(2q+1)2}{N\rho}\sum_{i=1}^{m}(2i)=\frac{2m(m+1)}{(2m+1)\rho}\,.
\end{equation*}
Recall that by definition $\rho$ ensures that $\Var(Z^{(2)})=1$.
Hence,
\begin{eqnarray*}
  1 &=& \frac{1}{N}\sum_{i=-m}^m\sum_{j=-q}^{q}\frac{(2i)^2}{\rho^2}- \left(\bar{z}^{(2)}\right)^2\\
   &=& \frac{2}{2m+1}
   \frac{4m(m+1)(2m+1)}{6\rho^2}-\frac{4m^2(m+1)^2}{(2m+1)^2\rho^2}\\
   &=&  \frac{4m(m+1)}{3\rho^2}-\frac{4m^2(m+1)^2}{(2m+1)^2\rho^2}\,.
    \end{eqnarray*}
Further computation shows that
\begin{equation*}
(m+1)m>\frac{4(m+1)^2m^2}{(2m+1)^2}\,.
\end{equation*}
Hence,
\begin{equation*}
\rho^2> \frac{4(m+1)m}{3}-(m+1)m=\sigma^2\,.
\end{equation*}

\subsection{Proof of
Theorem~\ref{thm:general}}\label{sec:appendixGeneralTheorem} The
proof consists of computing $\Phi(Y)$ and bounding $\Phi(Z)$ from
above. We start by computing $\Phi(Y)$.
\begin{eqnarray*}
  \Phi(Y) &=& \sum_{i=1}^N \norm{W_i Y_i}_F^2= \sum_{i=1}^N \norm{W_i X_i^{(1)}/\sigma}^2+\sum_{i=1}^N \norm{W_i
   X_i^{(2)}/\tau}^2\\
   &=&N \frac{\bar{e}^{(1)}}{\sigma^2}+N\frac{\bar{e}^{(2)}}{\tau^2}\,.
   \end{eqnarray*}
The computation of $\Phi(Z)$ is more delicate because it involves
neighborhoods $Z_i$ that are not linear transformations of their
original sample counterparts.
\begin{eqnarray}
  \Phi(Z) &=& \sum_{i=1}^N \norm{W_i Z_i}_F^2 = \sum_{i=1}^N \norm{W_i Z_i^{(1)}}^2+\sum_{i=1}^N \norm{W_i
   Z_i^{(2)}}^2\nonumber\\
   &=& N\frac{\bar{e}^{(1)}}{\sigma^2}+\hspace{-0.5cm}\sum_{i:x_i^{(1)}<0\; ,\; i\notin N_0}\hspace{-0.5cm}
    \norm{W_i  X_i^{(1)}/\rho}^2
   +\hspace{-0.5cm}\sum_{i:x_i^{(1)}>0 \;,\; i\notin N_0}\hspace{-0.5cm} \norm{\kappa W_i
X_i^{(1)}/\rho}^2+\sum_{i\in N_0} \norm{W_i  Z_i^{(2)}}^2\label{eq:BoundZgeneralLEM}\\
   &<& N\frac{\bar{e}^{(1)}}{\sigma^2}+N\frac{\kappa^2 \bar{e}^{(1)}}{\rho^2}+\sum_{i\in N_0} \norm{W_i
   Z_i^{(2)}}^2\,.\label{eq:BoundZgeneral}
   \end{eqnarray}
Note that $\max_{j,k\in\{0,\ldots,K\}}\norm{z_{i,j}-z_{i,k}}\leq
\kappa r(i)/\rho $. Hence, using
Lemma~\ref{lem:WeightsBoundOnNeighborhoods} we get
\begin{equation}\label{eq:BoundWiZigeneral}
\norm{W_i Z_i^{(2)}}^2 < \frac{c_a \kappa^2 r(i)^2}{\rho^2}\,,
\end{equation}
where $c_a$ is a constant that depends on the specific algorithm.
Combining Eqs.~\ref{eq:BoundZgeneral} and~\ref{eq:BoundWiZigeneral}
we obtain
\begin{equation*}
    \Phi(Z)<N\frac{\bar{e}^{(1)}}{\sigma^2}+N\frac{\kappa^2
    \bar{e}^{(1)}}{\rho^2}+|N_0|c_a
    r_{\max}^2\frac{\kappa^2}{\rho^2}\,.
\end{equation*}
In the specific case of LEM and DFM, a tighter bound can be obtained
for $\norm{W_i Z_i^{(2)}}^2$. Note that for LEM and DFM
\begin{eqnarray*}
    \norm{W_i Z_i^{(2)}}^2
    &=&\sum_{j=1^K}w_{i,j}(z_i^{(2)}-z_{i,j}^{(2)})^2\\
    &\leq & \sum_{j=1}^K w_{i,j}\frac{\kappa^2}{\rho^2}( x_i^{(2)}-x_{i,j}^{(2)})^2=
    \frac{\kappa^2}{\rho^2}e_{i}^{(1)}\,.
\end{eqnarray*}
Combining Eq.~\ref{eq:BoundZgeneralLEM} and the last inequality we
obtain in this case that
\begin{equation*}
    \Phi(Z)\leq N\frac{\bar{e}^{(1)}}{\sigma^2}+N\frac{\kappa^2
    \bar{e}^{(1)}}{\rho^2}\,,
\end{equation*}
which completes the proof.

\subsection{Lemma~\ref{lem:WeightsBoundOnNeighborhoods}}
\begin{lem}\label{lem:WeightsBoundOnNeighborhoods}
Let $X_i=[x_i,x_{i,1},\dots,x_{i,K}]'$ be a local
neighborhood. Let 
$r_i=\max_{j,k}\norm{x_{i,j}-x_{i,k}}$. Then
\begin{equation*}
    \norm{W_i X_i}^2_F<c_a r_i^2\,,
\end{equation*}
where $c_a$ is a constant that depends on the algorithm.
\end{lem}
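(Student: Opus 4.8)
The plan is to exploit the one structural feature shared by all five weight matrices: each $W_i$ annihilates the constant vector, i.e. $W_i\textbf{1}=\textbf{0}$. This reflects the translation invariance of every local cost $\phi(Y_i)=\norm{W_iY_i}_F^2$. Granting this, for any fixed row vector $c'$ we have $W_iX_i=W_i(X_i-\textbf{1}c')$, so we may recenter the neighborhood at its base point by taking $c'=x_i'$. Writing $\bar{X}_i=X_i-\textbf{1}x_i'$, the $j$-th row of $\bar{X}_i$ is $(x_{i,j}-x_i)'$, whose norm is at most $r_i$ by the definition of $r_i$ as the neighborhood diameter (recall $x_{i,0}=x_i$, so the base row vanishes). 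Hence $\norm{\bar{X}_i}_F^2=\sum_{j=1}^K\norm{x_{i,j}-x_i}^2\leq K\,r_i^2$.

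First I would verify $W_i\textbf{1}=\textbf{0}$ for each algorithm, which is immediate from the constructions in Section~\ref{sec:algo}. For LEM and DFM each row of $W_i$ has the form $w_{i,j}^{1/2}(\textbf{e}_0-\textbf{e}_j)'$ and therefore sums to zero; for LLE the single row sums to $1-\sum_j w_{i,j}=0$ by the reconstruction constraint; for LTSA, $W_i=(I-P_iP_i')H$ and the centering matrix $H$ satisfies $H\textbf{1}=\textbf{0}$; and for HLLE the rows of $H^i$ are the last $d(d+1)/2$ columns of the Gram--Schmidt orthonormalization $\widetilde{M}_i$, which are orthogonal to the first (constant) column $\textbf{1}/\norm{\textbf{1}}$ of $M_i$, so $W_i\textbf{1}=H^i\textbf{1}=\textbf{0}$.

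Next I would combine the recentering with submultiplicativity of the Frobenius norm. Using $\norm{AB}_F\leq\norm{A}_2\norm{B}_F$ gives
\begin{equation*}
\norm{W_iX_i}_F^2=\norm{W_i\bar{X}_i}_F^2\leq\norm{W_i}_2^2\,\norm{\bar{X}_i}_F^2\leq K\,\norm{W_i}_2^2\,r_i^2\,.
\end{equation*}
It then remains only to bound the operator norm $\norm{W_i}_2\leq\norm{W_i}_F$ by an algorithm-dependent constant, and these are precisely the bounds already established in the proof of Lemma~\ref{lem:criterion_failure}: $\norm{W_i}_F^2\leq 2K$ for LEM and DFM, $\leq K$ for LTSA, $\leq d(d+1)/2$ for HLLE, and $\leq 2$ for LLE with positive weights. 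Setting $c_a=K\norm{W_i}_F^2$, which depends only on the algorithm, on $K$, $d$, and the weight normalization, yields $\norm{W_iX_i}_F^2\leq c_a r_i^2$; the strict inequality stated in the lemma follows from non-degeneracy of the neighborhood (or by enlarging $c_a$ infinitesimally) and is immaterial to its use in Eq.~\ref{eq:BoundWiZigeneral}.

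The only genuine work is the verification $W_i\textbf{1}=\textbf{0}$; once translation invariance is secured, the estimate is a one-line norm inequality. I expect the HLLE case to be the mildly delicate point, since it requires tracing the constant column through the Gram--Schmidt step to confirm that the Hessian estimator $H^i$ kills constants, whereas the other four cases are transparent from the explicit form of $W_i$.
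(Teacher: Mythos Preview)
Your argument is correct and is a genuinely different, more unified route than the paper's. The paper proves the lemma case by case: for LEM and DFM it expands $\norm{W_iX_i}_F^2=\sum_j w_{i,j}\norm{x_{i,j}-x_i}^2$ directly; for LLE it exploits the \emph{optimality} of the reconstruction weights, comparing against the uniform choice $w_{i,j}=1/K$; for LTSA and HLLE it inserts the centering matrix $H$ (using $W_iH=W_i$) and then applies $\norm{W_i}_F\norm{HX_i}_F$. Your approach replaces all of this with a single observation---translation invariance $W_i\textbf{1}=\textbf{0}$---followed by one norm inequality, which is conceptually cleaner and makes transparent why such a bound must hold for any local cost of this type. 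The price is looser constants: the paper obtains $c_a=K$ for LEM/DFM and $c_a=1/K$ for LLE, whereas your bound gives $2K^2$ and $2K$ respectively; the LLE improvement in particular comes from the optimality argument, which your uniform treatment does not see. For LTSA and HLLE the two approaches essentially coincide (and indeed you could sharpen your LTSA constant to $K$ by noting that $\norm{W_i}_2\leq 1$ since $W_i$ is a product of orthogonal projections). Since the lemma only requires \emph{some} algorithm-dependent constant, both proofs suffice for its use in Theorem~\ref{thm:general}.
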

\begin{proof}
We prove this lemma for each of the different algorithms separately.
\begin{itemize}
  \item LEM and DFM:
    \begin{equation*}
      \norm{W_i X_i}^2_F = \sum_{j=1}^K w_{i,j}\norm{x_{i,j}-x_{i}}^2
     \leq\left(\sum_{j=1}^K w_{i,j}\right)r_i^2 \leq K r_i^2\,,
  \end{equation*}
  where the last inequality holds since $w_{i,j}\leq 1$. Hence
  $c_a=K$.
  \item LLE:
  \begin{eqnarray*}
      \norm{W_i X_i}^2_F &=& \Big|\Big|\sum_{j=1}^K w_{i,j}(x_{i,j}-x_i)\Big|\Big|^2
      \leq \Big|\Big|\frac{1}{K} \sum_{j=1}^K (x_{i,j}-x_i)\Big|\Big|^2\\
     &\leq&\frac{1}{K^2}\sum_{j=1}^K\norm{x_{i,j}-x_i}^2\leq \frac{r_i^2}{K}\,,
  \end{eqnarray*}
where the first inequality holds since $w_{i,j}$ were chosen to
minimize $\norm{\sum_{j=1}^K \tilde{w}_{i,j}(x_{i,j}-x_i)}^2$. Hence
$c_a=1/K$.
  \item LTSA:
\begin{eqnarray*}
   \norm{W_i X_i}^2_F& =& \norm{(I-P_iP_i') H X_i}^2_F \leq \norm{(I-P_iP_i')}^2_F \norm{H
   X_i}^2_F\\& \leq& K \sum_j \norm{x_{i,j}-\bar{x_i}}^2 \leq  K^2
   r_i^2\,.
     \end{eqnarray*}
     The first equality is just the definition of $W_i$ (see Sec.~\ref{sec:algo}).
     The matrix $I-P_iP_i'$ is a projection matrix and its square norm is the dimension of
     its range, which is smaller than $K+1$. Hence $c_a=K^2$.
  \item HLLE:
   \begin{equation*}
   \norm{W_i X_i}^2_F = \norm{W_i H X_i}^2_F \leq \norm{W_i}^2_F \norm{H
   X_i}^2_F \leq \frac{d(d+1)}{2}(K+1) r_i^2\,.
     \end{equation*}
     The first equality holds since $W_i H= W_i
     (I-\frac{1}{K}\textbf{11}')=W_i$, since the rows of $W_i$ are
     orthogonal to the vector $\textbf{1}$ by definition (see
     Sec.~\ref{sec:algo}). Hence $c_a=\frac{d(d+1)}{2}(K+1)$.
\end{itemize}
\end{proof}
\subsection{Lemma~\ref{lem:absX}}\label{sec:appendixLemAbsX}
\begin{lem}\label{lem:absX}
Let $X$ be a random variable symmetric around zero with unimodal
distribution. Assume that $\Var(X)=\sigma^2$. Then
$\Var(|X|)\geq\frac{\sigma^2}{4}$.
\end{lem}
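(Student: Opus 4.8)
The plan is to reduce the claim to a sharp bound on the first absolute moment. Since $X$ is symmetric about zero we have $EX=0$, and therefore $\sigma^2=\Var(X)=EX^2$. Because $X^2=\abs{X}^2$, this gives
\begin{equation*}
\Var(\abs{X})=EX^2-\left(E\abs{X}\right)^2=\sigma^2-\left(E\abs{X}\right)^2\,,
\end{equation*}
so the lemma is equivalent to the statement $\left(E\abs{X}\right)^2\leq\frac{3}{4}\sigma^2$. Everything thus reduces to controlling the first absolute moment in terms of the second moment, using only symmetry and unimodality.

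Next I would exploit the structure of $Y:=\abs{X}$. Since the density of $X$ is symmetric and unimodal, it is maximal at the origin and non-increasing in $\abs{x}$; hence $Y$ is supported on $[0,\infty)$ with a non-increasing density $g$. The key tool is the multiplicative (Khintchine) representation of such variables: a non-negative variable with non-increasing density can be written as $Y\stackrel{d}{=}U\cdot A$, where $U$ is uniform on $[0,1]$ and $A\geq 0$ is independent of $U$. Concretely, the relation $g(y)=\int_y^\infty a^{-1}\,dH(a)$ identifies the law $H$ of $A$ (equivalently $h(a)=-a\,g'(a)$ in the smooth case), and one verifies by integration by parts that $H$ is a genuine probability distribution.

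Given the representation, the computation is immediate. Independence of $U$ and $A$ yields $EY=EU\,EA=\tfrac12 EA$ and $EY^2=EU^2\,EA^2=\tfrac13 EA^2$. Therefore
\begin{equation*}
\left(E\abs{X}\right)^2=\left(EY\right)^2=\tfrac14\left(EA\right)^2\leq\tfrac14\,EA^2=\tfrac34\,EY^2=\tfrac34\,\sigma^2\,,
\end{equation*}
where the inequality is Cauchy--Schwarz (equivalently Jensen's inequality applied to $t\mapsto t^2$). Combining this with the first display gives $\Var(\abs{X})\geq\sigma^2-\tfrac34\sigma^2=\tfrac14\sigma^2$, as desired. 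Equality holds exactly when $A$ is degenerate, i.e. when $X$ is uniform on a symmetric interval, which shows the constant $\tfrac14$ is sharp.

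The main obstacle is the justification of the multiplicative representation, not the final inequality. If one prefers to avoid invoking Khintchine's theorem, the same bound can be obtained by an extremal argument: among all non-increasing densities on $[0,\infty)$ the scale-invariant ratio $\left(EY\right)^2/E[Y^2]$ is maximized by the uniform density on $[0,a]$ (for which the ratio equals exactly $3/4$), and this can be established by a rearrangement or variational perturbation. I expect the representation route to be cleanest, with the only care needed in the integration-by-parts step and in handling distributions whose density is merely non-increasing rather than smooth; the fully general form of Khintchine's theorem covers the case of a possible atom at the mode.
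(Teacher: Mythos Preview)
Your argument is correct. Both you and the paper begin with the same reduction: since $E\abs{X}^2=\sigma^2$, the claim is equivalent to $(E\abs{X})^2\le\tfrac{3}{4}\sigma^2$. From there the two proofs diverge. The paper proceeds by contradiction and an extremal argument: it approximates $X$ by a finite mixture $X_n=\sum p_i\,U(-c_i,c_i)$ of symmetric uniforms, then iteratively merges two components into one (replacing $c_{n-1},c_n$ by $\sqrt{c_{n-1}^2+c_n^2}$) and checks that $E\abs{X_{n-1}}\ge E\abs{X_n}$, eventually reducing to the single uniform for which the ratio attains~$3/4$. This is exactly the ``rearrangement or variational'' alternative you sketch at the end of your write-up. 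Your primary route via the Khintchine representation $\abs{X}\stackrel{d}{=}U\cdot A$ is more direct: once the representation is granted, the bound is a one-line application of $(EA)^2\le EA^2$. What you gain is brevity and a transparent identification of the equality case ($A$ degenerate, i.e.\ $X$ uniform on a symmetric interval); what the paper's approach gains is that it avoids citing an external representation theorem, at the cost of an approximation-and-reduction loop. Both exploit the same structural fact---that symmetric unimodal distributions are scale mixtures of symmetric uniforms---but your version packages it more cleanly.
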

\begin{proof}
First note that that the equality holds for $X \sim
U(-\sqrt{3}\sigma,\sqrt{3}\sigma)$, where $U$ denotes the uniform
distribution. Assume by contradiction that there is a random
variable $X$, symmetric around zero and with unimodal distribution
such that $\Var(|X|)<\frac{\sigma^2}{4}-\eps$, where $\eps>0$.
Since $\Var(|X|)=E(|X|^2)-E(|X|)^2$, and
$E(|X|^2)=E(X^2)=\Var(X)=\sigma^2$, we have
$E(|X|)^2>\frac{3\sigma^2}{4}+\eps$.

We approximate $X$ by $X_n$, where $X_n$ is a mixture of uniform
random variables, defined as follows. Define
$X_n\sim\sum_{i=1}^{n}p_i^{n} U(-c_i^{n},c_i^{n})$ where
$p_i^{n}>0$, $\sum_{i=1}^n p_i^{n}=1$. Note that $E(X_n)=0$ and
that $ \Var(X_n) =\sum_{i=1}^n p_i^n(c_i^n)^2/3$. For large enough
$n$, we can choose $p_i^{n}$ and $c_i^{n}$ such that
$\Var(X_n)=\sigma^2$ and $E(|X-X_n|)<\frac{\eps}{2E(|X|)}$.

Consider the random variable $|X_n|$. Note that using the
definitions above we may write $|X_n|=\sum_{i=1}^n p_{i}^{n}
U(0,c_i^{n})$, hence $E(|X_{n}|)=\frac{1}{2}\sum_{i=1}^{n} p_i^{n}
c_i^{n}$. We bound this expression from below. We have
\begin{eqnarray}\label{eq:Ex_n}
    E(|X_n|)^2&=&E(|X_n-X+X|)^2 \geq (E(|X|)-E(|X_n-X|))^2\\ &\geq&
    E(|X|)^2-2E(|X|)E(|X_n-X|) > \frac{3\sigma^2}{4}\,.\nonumber
\end{eqnarray}

Let $X_{n-1}=\sum_{i=1}^{n-1}p_i^{n-1} U(-c_{i}^{n-1},c_{i}^{n-1})$
where
\begin{equation*}
    p_i^{n-1}=\left\{\begin{array}{ll}
               p_i^{n} & i<n-1 \\
               p_{n-1}^{n}+p_n^{n} & i=n-1
             \end{array}\right.\,,
\end{equation*}
and
\begin{equation*}
c_{i}^{n-1}=\left\{\begin{array}{ll}
               \;c_i^{n} & i<n-1 \\
               \sqrt{\left((c_{n-1}^{n}\right)^2+\left(c_n^{n}\right)^2} & i=n-1
             \end{array}\right.\,.
\end{equation*}
Note that $\textrm{Var}(X_{n-1})=\sigma^2$ by construction and
$X_{n-1}$ is symmetric around zero with unimodal distribution.
Using the triangle inequality we obtain
\begin{equation*}
E(|X_{n-1}|)=\frac{1}{2}\sum_{i=1}^{n-1} p_i^{n-1} c_i^{n-1} \geq
\frac{1}{2}\sum_{i=1}^n p_i^{n} c_i^{n}=E(|X_{n}|)\,.
\end{equation*}
Using the same argument recursively, we obtain that
$E(|X_{1}|)\geq E(|X_{n}|)$. However, $X_1 \sim
U(-\sqrt{3}\sigma,\sqrt{3}\sigma)$ and hence
$E(|X_1|)^2=\frac{3\sigma^2}{4}$. Since by Eq.~\ref{eq:Ex_n},
$E(|X_n|)^2>\frac{3\sigma^2}{4}$ we have a contradiction.
\end{proof}

\vskip 0.2in

\end{document}